\documentclass[letterpaper]{article} 
\usepackage{aaai25}  
\usepackage{times}  
\usepackage{helvet}  
\usepackage{courier}  
\usepackage[hyphens]{url}  
\usepackage{graphicx} 
\urlstyle{rm} 
\usepackage{natbib}  
\usepackage{caption} 
\frenchspacing  
\setlength{\pdfpagewidth}{8.5in}  
\setlength{\pdfpageheight}{11in}  
%
\usepackage{algorithm}
\usepackage{algorithmic}
\usepackage{amsmath,amsfonts,bm}

\usepackage{amsthm}
\usepackage{amssymb}
\usepackage{graphicx}
\usepackage{xcolor}
\usepackage{cite}
\usepackage{booktabs}
\usepackage{multicol}
\usepackage{multirow}
\usepackage{subcaption}
\usepackage{threeparttable}

\newtheorem{assumption}{Assumption}
\newtheorem{definition}{Definition}
\newtheorem{theorem}{Theorem}
\newtheorem{lemma}{Lemma}
\newtheorem{remark}{Remark}
\newtheorem{corollary}{Corollary}
%
\usepackage{newfloat}
\usepackage{listings}
\DeclareCaptionStyle{ruled}{labelfont=normalfont,labelsep=colon,strut=off} 
\lstset{%
	basicstyle={\footnotesize\ttfamily},
	numbers=left,numberstyle=\footnotesize,xleftmargin=2em,
	aboveskip=0pt,belowskip=0pt,%
	showstringspaces=false,tabsize=2,breaklines=true}
\floatstyle{ruled}
\newfloat{listing}{tb}{lst}{}
\floatname{listing}{Listing}
%
\pdfinfo{
/TemplateVersion (2025.1)
}

\setcounter{secnumdepth}{2} 

%


\title{Memory-Reduced Meta-Learning with Guaranteed Convergence}
\author {
    Honglin Yang\textsuperscript{\rm 1,\rm 2},
    Ji Ma\textsuperscript{\rm 1,\rm 2}\equalcontrib,
    Xiao Yu\textsuperscript{\rm 2,\rm 3}
}
\affiliations {
    \textsuperscript{\rm 1}Department of Automation, Xiamen University, Xiamen, China\\
    \textsuperscript{\rm 2}Key Laboratory of Multimedia Trusted Perception and Efficient Computing, Ministry of Education of China, Xiamen, China\\
    \textsuperscript{\rm 3}Institute of Artificial Intelligence, Xiamen University, Xiamen, China\\
    yanghonglin@stu.xmu.edu.cn, maji08@xmu.edu.cn, xiaoyu@xmu.edu.cn
}

\usepackage{bibentry}

\begin{document}

\maketitle

\begin{abstract}
The optimization-based meta-learning approach is gaining increased traction because of its unique ability to quickly adapt to a new task using only small amounts of data. However, existing optimization-based meta-learning approaches, such as MAML, ANIL and their variants, generally employ backpropagation for upper-level gradient estimation, which requires using historical lower-level parameters/gradients and thus increases computational and memory overhead in each iteration. In this paper, we propose a meta-learning algorithm that can avoid using historical parameters/gradients and significantly reduce memory costs in each iteration compared to existing optimization-based meta-learning approaches. In addition to memory reduction, we prove that our proposed algorithm converges sublinearly with the iteration number of upper-level optimization, and the convergence error decays sublinearly with the batch size of sampled tasks. In the specific case in terms of deterministic meta-learning, we also prove that our proposed algorithm converges to an exact solution. Moreover, we quantify that the computational complexity of the algorithm is on the order of $\mathcal{O}(\epsilon^{-1})$, which matches existing convergence results on meta-learning even without using any historical parameters/gradients. Experimental results on meta-learning benchmarks confirm the efficacy of our proposed algorithm.
\end{abstract}

\section{Introduction}
Meta-learning has emerged as a promising paradigm in modern machine learning. Unlike traditional machine learning methods that require large amounts of data points for model training~\cite{traditionalML1, traditionalML2}, meta-learning leverages prior experience from previously learned tasks to quickly adapt to new tasks with limited data. This feature makes meta-learning particularly appealing for applications where data are scare or expensive to obtain~\cite{denevi2019online, infeasible}. To date, numerous meta-learning methods have been proposed, such as metric-based approaches~\cite{vinyals2016matching-metric1, snell2017prototypical-metric2}, optimization-based approaches~\cite{MAML, ANIL}, and model-based approaches~\cite{santoro2016meta-model1, li2019lgm-model2}. Recently, with the rapid progress of optimization theory in machine learning, optimization-based meta-learning approaches are gaining increased traction~\cite{Bertinetto_2018_bi_variant, imaml, lee2023shot, wu2023metazscil, pmlr-v235-chen24ap}.

Optimization-based approaches often formulate meta-learning as a bilevel programming problem, where the lower-level optimization is dedicated to adapting the model to a given task, and the upper-level optimization aims to train meta-parameters that can enable the model to generalize to new tasks~\cite{maclaurin2015gradient, zhang2024metadiff}.
Over the past decades, numerous optimization-based meta-learning approaches have been proposed, with typical examples including Model-Agnostic Meta-Learning (MAML)  \cite{MAML} and its variants~\cite{finn2017one-variant, DEML-variant, Reptile, ANIL, BOIL_variant, chi2022metafscil, kirsch2022introducing}, and bilevel-optimization-based approaches~\cite{Franceschi_2018_bi_variant, Bertinetto_2018_bi_variant, zhou2019efficient_bi, Ji_2020_AID}. However, traditional optimization-based meta-learning approaches employ iterative differentiation to estimate upper-level gradients, i.e., hypergradient estimation, which requires the use of historical lower-level parameters/gradients, leading to increased computational and memory overhead, as evidenced by our experimental results in Figure~\ref{fig2} and Table~\ref{tab:metrics}. Although recent approaches, such as FOMAML~\cite{MAML} and others~\cite{Reptile, zhou2019efficient_bi, fallah2020convergence},
mitigate the memory costs in each iteration by avoiding the use of historical lower-level parameters/gradients, they suffer from decreased learning accuracy due to the lack of information from these lower-level parameters/gradients.

\subsection{Contributions}
\quad 1. We propose an optimization-based meta-learning algorithm that eliminates the need for using any historical parameters/gradients, and thus ensures nearly invariant memory costs over iterations. This is fundamentally different from traditional MAML-based and iterative-differentiation-based meta-learning approaches such as \citet{MAML, ANIL, BOIL_variant}, where memory costs grow to infinity when the iteration number of lower-level optimization tends to infinity. In fact, our experimental results in Figure~\ref{fig2} show that our algorithm achieves at least a $50\%$ reduction in memory costs compared to ANIL in~\citet{ANIL} and the iterative-differentiation-based approach in~\citet{Ji_2020_AID}.

2. In addition to reducing memory costs, we also establish the convergence rate and computational complexity of our algorithm for both stochastic and deterministic meta-learning, which is different from the existing bilevel programming literature, e.g.,~\citet{Franceschi_2018_bi_variant, shaban2019truncated, Ji_2020_AID}, that focus solely on deterministic cases. For stochastic meta-learning, we prove that our algorithm converges sublinearly with the number of iterations, and the convergence error decays sublinearly with the batch size of sampled tasks, which is consistent with the nonconvex nature of the stochastic meta-learning objective functions. For deterministic meta-learning, we prove that our algorithm can converge sublinearly to an exact solution.

3. We quantify that the computational complexity of our algorithm is on the order of $\mathcal{O}(\epsilon^{-1})$ for both stochastic and deterministic meta-learning, which matches the currently well-known results on optimization-based meta-learning~\cite{imaml, ji2020convergence}.

4. Furthermore, since our algorithm estimates the hypergradient using the product of the inverse of Hessian and vectors without computing the full Hessian or Jacobian matrices, we improve the computational complexity of the hypergradient estimation in previous meta-learning approaches~\cite{park2019meta-hessian2, hiller2022enforcing-hessian1} from $\mathcal{O}(q^2)$ (or $\mathcal{O}(pq)$) to $\mathcal{O}(\max\{p,q\})$ per iteration.

5. We conduct experimental evaluations using several meta-learning benchmark datasets, including the ``CIFAR-FS", ``FC100", ``miniImageNet", and ``tieredImageNet" datasets. The results confirm the efficiency of our algorithm in terms of both learning accuracy and memory reduction.

\subsection{Related Work}
The earliest optimization-based meta-learning approach is Model-Agnostic Meta-Learning (MAML), which is an initial parameter-transfer method where the objective is to learn a good ``optimal initial model parameters"~\cite{MAML}. Despite the widespread use of MAML and its variants such as DEML~\cite{DEML-variant}, ANIL~\cite{ANIL}, and BOIL~\cite{BOIL_variant} in image classification~\cite{chi2022metafscil, ullah2022meta_image-classification}, reinforcement learning~\cite{vuorio2019multimodal_rl, kirsch2022introducing}, and imitation learning~\cite{finn2017one-variant, li2021meta-il}, these approaches rely on iterative differentiation for hypergradient estimation, which requires using historical lower level parameters/gradients, leading to high computational and memory overhead. To solve this issue, ~\citet{MAML} proposed a simplified MAML algorithm that avoids leveraging historical lower-level parameters/gradients and uses only the last parameters in lower-level optimization for hypergradient updates, thereby reducing computational and memory overhead. However, the memory reduction achieved by this approach comes at the expense of degrading learning accuracy, which is undesirable in accuracy-sensitive meta-learning applications~\cite{zhou2019efficient_bi, peng2020comprehensiveoverviewsurveyrecent}.

Driven by the need of reducing computational and memory costs in optimization-based meta-learning approaches, plenty of bilevel-optimization-based algorithms have recently been proposed~\cite{Bertinetto_2018_bi_variant,Ji_2020_AID, pmlr-v235-chen24ap} (which aim to learn good embedding model parameters, rather than focusing on a good initialization as MAML-based approaches do). Initially, these algorithms addressed meta-learning problems by treating the lower-level optimality condition as constraints to the upper-level problem. However, this method is inapplicable to the scenario where the lower-level problem is large scale or the lower-level objective function has a complex structure. More recently, \citet{imaml, Ji_2020_AID} proposed an approximate-implicit-differentiation-based approach to solve meta-learning problems. However, this approach requires a common optimal solution for lower-level optimization, which is often difficult to satisfy in many meta-learning applications (note that in meta-learning, the lower-level optimization aims to adapt the model to different tasks, leading to heterogeneous optimal solutions for lower-level optimization problems). Moreover, the convergence analysis in~\citet{Ji_2020_AID} is based on deterministic gradients, and it is not clear if the deterministic convergence analysis methods can be extended to stochastic meta-learning, where the task distribution i-s unknown in practice.

\section{Preliminaries}\label{Section-2}
\paragraph{Notations} We use $\mathbb{R}$ and $\mathbb{N}^{+}$ to represent the sets of real numbers and positive integers, respectively. We abbreviate \textit{with respect to} by \textit{w.r.t.} We denote $\nabla F(\theta)\in\mathbb{R}^{p}$ as the gradient of $F(\theta)$. We use $\nabla_{\theta}g(\theta,\phi)$ and $\nabla_{\phi}g(\theta,\phi)$ to represent the gradients of $g$ \textit{w.r.t.} $\theta$ and $\phi$, respectively. We write $\nabla_{\theta\phi}^2g(\theta,\phi)\in\mathbb{R}^{p\times q}$ for the Jacobian matrix of $g$ and $\nabla_{\phi}^2g(\theta,\phi)\in\mathbb{R}^{q\times q}$ for the Hessian matrix of $g$ \textit{w.r.t.} $\phi$. For vectors $\phi_{1},\cdots,\phi_{n}$, we use $\boldsymbol{\phi}=\text{col}\{\phi_{1},\cdots,\phi_{n}\}$ to represent their stacked column vector. For a set $\mathcal{B}$, we denote the number of its elements by $|\mathcal{B}|$.

\subsection{Meta-Learning}
We consider a meta-learning problem with a series of tasks $\{\mathcal{T}_{i},~i\in{\mathbb{N}^{+}}\}$. All tasks are drawn from an unknown task distribution $\mathcal{P}(\mathcal{T})$. Each task $\mathcal{T}_{i}$ has a loss function $\mathcal{L}(\theta,\phi;\xi_i)$ over each data point $\xi_i$, where $\theta$ represents the meta-parameters of an embedding model shared by all tasks, and $\phi$ represents the task-specific parameters. The goal of meta-learning is to find common optimal parameters $\theta^*$ that benefit all tasks, and based on the parameters $\theta^*$, the model can quickly adapt its own parameters $\phi$ to a new task $\mathcal{T}_{i}$ using only a few data points and training iterations. Typically, the meta-learning problem can be formulated as the following bilevel programming problem:
\begin{equation}
\begin{aligned}
&\min_{\theta \in \mathbb{R}^{p}}\,F(\theta)=\mathbb{E}_{\mathcal{T}_{i}\sim\mathcal{P}(\mathcal{T})}[f_{i}(\theta,\phi_{i}^*(\theta))],\\
&\textrm{s.t.}\quad \phi_{i}^*(\theta)=\underset{\phi\in \mathbb{R}^{q}}{\text{argmin}}~g_{i}(\theta,\phi).\label{primal}
\end{aligned}
\end{equation}
In task $\mathcal{T}_{i}$, the upper-level objective function $f_{i}(\theta,\phi)$ is given by $f_{i}(\theta,\phi)\triangleq\frac{1}{|\mathcal{D}_{i,f}|}\sum_{j=1}^{|\mathcal{D}_{i,f}|}\mathcal{L}(\theta, \phi; \xi_{i,f}^{j})$, where data $\xi_{i,f}^{j},~j\in\{1,\cdots,|\mathcal{D}_{i,f}|\}$ is sampled from the validation dataset $\mathcal{D}_{i,f}$, and the objective function $g_{i}(\theta,\phi)$ is given by $g_{i}(\theta,\phi)\!\triangleq\!\frac{1}{|\mathcal{D}_{i,g}|}\sum_{j=1}^{|\mathcal{D}_{i,g}|}\mathcal{L}(\theta, \phi; \xi_{i,g}^{j})+R(\phi)$, where data $\xi_{i,g}^{j},~j\in\{1,\cdots,|\mathcal{D}_{i,g}|\}$ is sampled from training dataset $\mathcal{D}_{i,g}$ and $R(\phi)$ is a strongly-convex regularizer \textit{w.r.t.} $\phi$.

\begin{remark}
Compared with existing bilevel-programming frameworks which use common lower-level optimal parameters $\phi^*$ to enable upper-level optimization~(see, e.g.,~\citet{Ji_2020_AID}), our meta-learning framework in~\eqref{primal} allows different tasks $\mathcal{T}_{i}$ to have different lower-level optimal parameters $\phi_{i}^*$. This heterogeneity significantly complicates our convergence analysis.
\end{remark}
\begin{remark}
The meta-learning formulation in~\eqref{primal} involves bilevel-programming objectives. This is fundamentally different from conventional single-level machine learning, which focuses on learning optimal parameters $\phi^*$ for a specific task using a large number of data points~\cite{ soydaner2020comparison, chen2023quantized, chen2024locally}. However, in many applications, collecting data is costly and time consuming, and may even be infeasible due to physical system constraints~\cite{infeasible}.
Therefore, meta-learning is typically regarded as a more general framework than conventional single-level machine learning~\cite{Shu_Meng_Xu_2021_meta-vs-ml}.
In the special case where the upper-level objective function is absent, the meta-learning problem can be reduced to a single-level machine learning problem.
\end{remark}

The objective functions in~\eqref{primal} satisfy the following standard assumptions.
\begin{assumption}\label{assumption1}
$F(\theta)$ is nonconvex and $g_{i}(\theta,\phi)$ is $\mu$-strongly convex \textit{w.r.t.} $\phi$.
\end{assumption}
\begin{assumption}\label{assumption2}
We denote $f(\theta, \phi)\triangleq\mathbb{E}_{\mathcal{T}_{i}\sim\mathcal{P}(\mathcal{T})}[f_{i}(\theta,\phi)]$ and
$g(\theta,\phi)\triangleq\mathbb{E}_{\mathcal{T}_{i}\sim\mathcal{P}(\mathcal{T})}[g_{i}(\theta,\phi)]$. Then, the following statements are satisfied:

(i) Functions $f$, $\nabla f$,
$\nabla g$, $\nabla_{\theta\phi}^2g$, and $\nabla_{\phi}^2g$
are $l_{f,0}$, $l_{f}$, $l_{g}$, $l_{g,1}$, and $l_{g,2}$
Lipschitz continuous, respectively.

(ii) Gradients $\nabla f_i(\theta, \phi)$, $\nabla_{\phi}^2g_i(\theta, \phi)$ and $\nabla_{\theta\phi}^2g_i(\theta, \phi)$ are unbiased and have bounded variances $\sigma_{f1}^2$, $\sigma_{g1}^2$, and $\sigma_{g2}^2$, respectively.
\end{assumption}

Assumptions~\ref{assumption1} and~\ref{assumption2} are standard in meta-learning literature; see, for example~\citet{ji2020convergence}. Note that we do not impose that the lower-level function $g_{i}$ is Lipschitz \textit{w.r.t.} $\phi$, which is required in iMAML in~\citet{imaml}. Note that iMAML implicitly assumes the search space of parameters $\phi_{i}$ to be bounded such that function $g_{i}$ is bounded. Moreover, since each task $\mathcal{T}_{i},~i\in\{1,2,\cdots,|\mathcal{B}|\}$ is randomly sampled from the task distribution $\mathcal{P}(\mathcal{T})$, gradients $\nabla f_{i}(\theta,\phi)$ and $\nabla g_{i}(\theta,\phi)$ are stochastic, and thus we assume that they satisfy Assumption~\ref{assumption2}-(ii).

In practice, the task distribution $\mathcal{P}(\mathcal{T})$ is usually unknown, making it impossible to directly solve problem~\eqref{primal}. To circumvent this problem, a common approach is to reformulate~\eqref{primal} as the following empirical risk minimization problem:
\begin{equation}
\begin{aligned}
&\min_{\theta \in \mathbb{R}^{p}} F_{\mathcal{B}}(\theta)=f_{\mathcal{B}}\left(\theta, \boldsymbol{\phi}^*(\theta)\right)=\frac{1}{|\mathcal{B}|}\sum_{i=1}^{|\mathcal{B}|}f_{i}(\theta,\phi_{i}^*(\theta)),\\
&\textrm{s.t.}\quad \phi_{i}^*(\theta)=\underset{\phi \in \mathbb{R}^{q}}{\text{argmin}}~g_{i}(\theta, \phi),\label{primalB}
\end{aligned}
\end{equation}
where $\boldsymbol{\phi}^*(\theta)$ represents a stacked vector of task-specific optimal parameters $\phi_{i}^{*}$ and $|\mathcal{B}|$ denotes the batch size of sampled tasks.

\subsection{Main Challenges in Meta-Learning}
The main challenge in solving problem~\eqref{primalB} lies in computing the hypergradient $\nabla F_{\mathcal{B}}(\theta)$ of the upper-level function \textit{w.r.t.} $\theta$. From~\eqref{primalB}, the hypergradient computation requires knowledge of $\boldsymbol{\phi}^*(\theta)$. However, obtaining $\boldsymbol{\phi}^*(\theta)$ is often difficult in large-scale meta-learning, especially when the lower-level objective function $g_{i}(\theta,\phi)$ has a complex structure.

To solve this issue, a commonly used approach is to employ iterative differentiation for hypergradient estimation; see, for example,~\citet{domke2012generic, maclaurin2015gradient, Franceschi2017ForwardAR}. More specifically, the iterative-differentiation-based approach first executes a $K$-step gradient descent to solve the lower-level optimization problem in~\eqref{primalB} and obtain an approximate solution $\phi_{i}^{K}(\theta)$ for each task $\mathcal{T}_{i}$. Then, building on the solution $\phi_{i}^{K}(\theta)$, this approach estimates the hypergradient by using the following relation~\cite{Ji_2020_AID}:
\begin{flalign}
&\frac{\partial f_{\mathcal{B}}(\theta, \boldsymbol{\phi}^{K}(\theta))}{\partial \theta}=
\frac{1}{|\mathcal{B}|}\sum_{i=1}^{|\mathcal{B}|}\frac{\partial f_{i}(\theta,\phi_{i}^{K}(\theta))}{\partial \theta}\nonumber\\
&=\frac{1}{|\mathcal{B}|}\sum_{i=1}^{|\mathcal{B}|}\left(\nabla_{\theta}f_{i}(\theta,\phi_{i}^{K}(\theta))-\lambda_{\phi}\sum_{k=0}^{K-1}\nabla_{\theta\phi}^{2}g_{i}(\theta,\phi_{i}^{k}(\theta))\times\right.\nonumber\\
&\left.\quad\prod_{q=k+1}^{K-1}\left(I\!-\!\lambda_{\phi}\nabla_{\phi}^2g_{i}(\theta,\phi_{i}^{q}(\theta))\right)\nabla_{\phi}f_{i}(\theta,\phi_{i}^{K}(\theta))\right).\label{chain}
\end{flalign}
It is clear that the hypergradient estimation in~\eqref{chain} relies on historical lower-level parameters $\phi_{i}^{k}(\theta),k=0,\cdots, K$. As the iteration number of lower-level optimization grows, this dependency significantly increases both computational and memory overhead, as evidenced by our experimental results in Figure~\ref{fig2} and Table~\ref{tab:metrics}. Moreover, obtaining a good approximation for $\phi_{i}^{*}(\theta)$ often requires numerous iterations in lower-level optimization, which inevitably extends the backpropagation chain and results in gradient problems such as vanishing or exploding hypergradient estimation~\cite{ji2020convergence, jamal2021lazy_itd-problem}.

Motivated by these observations, we aim to propose a meta-learning algorithm that can avoid using historical lower-level parameters/gradients for hypergradient estimation while still ensuring comparable convergence compared with existing optimization-based meta-learning approaches.

\section{Methodology}\label{Section-3}
In this section, we propose a memory-reduced algorithm for meta-learning with provable convergence to a solution $\theta^*$ to problem~\eqref{primalB}. Before presenting our algorithm, we first introduce our hypergradient-estimation approach.

\subsection{Hypergraident Estimation}
Inspired by the recent results on stochastic bilevel programming~\cite{ghadimi2018approximationmethodsbilevelprogramming, lorraine2020optimizing}, we estimate the hypergradient by using the following relation:
\begin{flalign}
\nabla F_{\mathcal{B}}(\theta)&= \nabla_{\theta}f_{\mathcal{B}}(\theta, \boldsymbol{\phi}^*(\theta))-
\frac{1}{|\mathcal{B}|}\sum_{i=1}^{|\mathcal{B}|}\left(
\nabla_{\theta\phi}^2g_{i}(\theta, \phi_{i}^*(\theta))\right.\nonumber\\
&\left.\quad \times [\nabla_{\phi}^{2}g_{i}(\theta, \phi_{i}^*(\theta))]^{-1}\nabla_{\phi}f_{i}(\theta, \phi_{i}^*(\theta))\right).\label{hypergradientestimate}
\end{flalign}

From~\eqref{hypergradientestimate}, it can be seen that obtaining $\nabla F_{\mathcal{B}}(\theta)$ requires computing the inverse of Hessian matrix $[\nabla_{\phi}^{2}g_{i}(\theta, \phi_{i}^*(\theta))]^{-1}$ and Jacobian matrix $\nabla_{\theta\phi}^{2}g_{i}(\theta, \phi_{i}^{*}(\theta))$. To avoid computing the full Hessian/Jacobian matrix, we aim to estimate the
Hessian-inverse-vector product:
\begin{equation}
v_{i}^*=[\nabla_{\phi}^{2}g_{i}(\theta, \phi_{i}^*(\theta))]^{-1}\nabla_{\phi}f_{i}(\theta, \phi_{i}^*(\theta)).\label{vi*}
\end{equation}
Based on~\eqref{hypergradientestimate}, the hypergradient can be rewritten as
\begin{equation}
\nabla F_{\mathcal{B}}(\theta)= \nabla_{\theta}f_{\mathcal{B}}(\theta, \boldsymbol{\phi}^*(\theta))-
\frac{1}{|\mathcal{B}|}\sum_{i=1}^{|\mathcal{B}|}
\nabla_{\theta\phi}^2g_{i}(\theta, \phi_{i}^*(\theta))v_{i}^*,\label{hyperB}
\end{equation}
where $\nabla_{\theta\phi}^2g_{i}(\theta, \phi_{i}^*(\theta))v_{i}^*$ will be referred to as the Jacobian-vector product. It follows from~\eqref{hyperB} that if the estimation of $\nabla_{\theta}f_{\mathcal{B}}(\theta, \boldsymbol{\phi}^*(\theta))$, $v_{i}^*$, and $\nabla_{\theta\phi}^2g_{i}(\theta, \phi_{i}^{*}(\theta))v_{i}^*$ is accurate enough, a good estimation of the hypergradient is obtained.

Note that estimating the
Hessian-inverse-vector product $v_{i}^*$ and further Jacobian-vector product $\nabla_{\theta\phi}^2g_{i}(\theta, \phi_{i}^*(\theta))v_{i}^*$ circumvents the requirement on estimating the full Hessian and Jacobian matrices, and hence, reduces the computational complexity from the order of $\mathcal{O}(q^2)$ (or $\mathcal{O}(pq)$) to the order of $\mathcal{O}(\max\{p,q\})$ per iteration. This is different from existing results on bilevel programming and meta-learning~\cite{park2019meta-hessian2, chen2022single,hiller2022enforcing-hessian1}, which estimate the hypergradient by computing full matrices directly, leading to heavy computational overhead.

\subsection{Memory-Reduced Meta-Learning Algorithm Design}
In this section, we first introduce an approach to estimate the Hessian-inverse-vector product $v_{i}^*$, which is necessary for estimating the hypergradient according to~\eqref{hyperB}. Using it as a subroutine, we will then propose our memory-reduced meta-learning algorithm.
\begin{algorithm}
\renewcommand{\thealgorithm}{1}
\floatname{algorithm}{Subroutine}
\caption{Estimating Hessian-inverse-vector product at the $t$-th outer loop iteration}
\label{subroutine1}
\begin{algorithmic}[1]
\STATE \textbf{Input:} Parameters  $\theta_t$ and $\phi_{i,t}^{K}$; integer $N$.
\STATE $v_{i,t}^{0}=v_{i,t-1}^{N}$ if $t>0$, and $v_{i,t}^{0}=\bm{0}_{q}$ otherwise.
\STATE $r_{i,t}^0=p_{i,t}^0=\nabla_{\phi}f_i(\theta_t,\phi_{i,t}^{K})-\nabla_{\phi}^2g_i(\theta_t, \phi_{i,t}^{K})v_{i,t}^0$.
\FOR{$n=0,1...,N-1$}
\STATE {Get Hessian-vector product~$h_{i,t}^{n}=\nabla_{\phi}^{2}g_{i}(\theta_{t},\phi_{i,t}^{K})p_{i,t}^{n}$}.\\
\STATE $\eta_{i,t}^{n}=\frac{{r_{i,t}^{n}}^{T}r_{i,t}^{n}} {{p_{i,t}^{n}}^{T}h_{i,t}^{n}}$.
\STATE{$v_{i,t}^{n+1} = v_{i,t}^{n}+\eta_{i,t}^{n}p_{i,t}^{n}$}.
\STATE{$r_{i,t}^{n+1}=r_{i,t}^{n}-\eta_{i,t}^{n}h_{i,t}^{n}$
}.
\STATE{$\zeta_{i,t}^{n}=\frac{{(r_{i,t}^{n+1})}^{T}r_{i,t}^{n+1}}{{r_{i,t}^{n}}^{T}r_{i,t}^{n}}$}.
\STATE{$p_{i,t}^{n+1}=r_{i,t}^{n+1}+\zeta_{i,t}^{n}p_{i,t}^{n}$}.
\ENDFOR
\STATE{\textbf{Output: }$v_{i,t}^{N}$.}
\end{algorithmic}
\end{algorithm}

\begin{algorithm}[ht]
\renewcommand{\thealgorithm}{1}
\caption{Memory-reduced meta-learning algorithm}
\begin{algorithmic}[1]
\STATE{\textbf{Input:}} The batch size of sampled tasks $|\mathcal{B}|$; random initialization $\theta_{0}$ and $\phi_{i,0}$ for all $i\in\{1,\cdots,|\mathcal{B}|\}$; stepsizes $\lambda_{\phi}$ and $\lambda_{\theta}$; integers $T$ and $K$.
\FOR{$t=0,1,2,...,T-1$}
\STATE Sample a task batch $\mathcal{B}\sim \mathcal{P}(\mathcal{T})$.
\STATE $u_{0}=\bm{0}_{p}$.
\FOR{$i=1,\cdots,|\mathcal{B}|$}
\STATE {Set $\phi_{i,t}^{0}=\phi_{i,t-1}^{K}$ if $t>0$, and $\phi_{i,0}$ otherwise.}
\FOR{$k=0,1,...,K-1$}
\STATE $\phi_{i,t}^{k+1} = \phi_{i,t}^{k}-\lambda_{\phi} \nabla_{\phi}g_{i}(\theta_{t}, \phi_{i,t}^{k})$.
\ENDFOR
\STATE Run Subroutine~\ref{subroutine1} and obtain Hessian-inverse-vector product $v^{N}_{i,t}$.
\STATE Accumulate Jacobin-vector products $u_{i}= u_{i-1}+\nabla_{\theta\phi}^{2}g_{i}(\theta_{t},\phi_{i,t}^{K})v_{i,t}^{N}$.
\ENDFOR
\STATE Compute $\nabla\widehat{F}_{\mathcal{B}}(\theta_t)=\nabla_{\theta}f_{\mathcal{B}}(\theta_{t},\boldsymbol{\phi}_{t}^{K})-\frac{1}{|\mathcal{B}|}u_{|\mathcal{B}|}$.
\STATE $\theta_{t+1}=\theta_{t}-\lambda_{\theta} \nabla\widehat{F}_{\mathcal{B}}(\theta_t)$.
\ENDFOR
\STATE{\textbf{Output:}$\theta_{T}$}
\end{algorithmic}
\label{algorithm1}
\end{algorithm}

Approximating $v_{i}^*$ in~\eqref{vi*} equals to solving the equation $\nabla_{\phi}^{2}g_{i}(\theta, \phi_{i}^{*}(\theta))v_{i}=\nabla_{\phi}f_{i}(\theta, \phi_{i}^{*}(\theta))$, which is the optimality condition of the following optimization problem:
\begin{equation}
\min_{v\in \mathbb{R}^{q}}\varphi(v),\quad\varphi(v_{i})=\frac{1}{2}v_{i}^{T}H_{i}v_{i}-b_{i}^{T}v_{i},\label{Hessian-v-p}
\end{equation}
where $H_{i}$ and $b_{i}$ are given by $H_{i}=\nabla_{\phi}^{2}g_{i}(\theta, \phi_{i}^{*}(\theta))$ and $b_{i}=\nabla_{\phi}f_{i}(\theta, \phi_{i}^{*}(\theta))$, respectively.

We present Subroutine~\ref{subroutine1} to find the optimal solution $v_{i}^*$ to problem~\eqref{Hessian-v-p}.

Building on~Subroutine~\ref{subroutine1}, we can estimate the hypergradient in~\eqref{hyperB} by using the following equality:
\begin{equation}
\nabla\widehat{F}_{\mathcal{B}}(\theta_t)=\nabla_{\theta}f_{\mathcal{B}}(\theta_{t},\boldsymbol{\phi}_{t}^{K})-\frac{1}{|\mathcal{B}|}\sum_{i=1}^{|\mathcal{B}|}
\nabla_{\theta\phi}^2g_{i}(\theta, \phi_{i,t}^{K})v_{i,t}^{N}.\label{hyperC}
\end{equation}

With the hypergradient estimation~\eqref{hyperC}, we propose a memory-reduced algorithm for solving the meta-learning problem~\eqref{primal} in Algorithm~\ref{algorithm1}.

In Algorithm~\ref{algorithm1}, we use parameters $\phi_{i,t}^{K}$ at the last inner-loop iteration for hypergradient estimation. This is fundamentally different from existing iterative-differentiation-based meta-learning approaches, such as~\citet{MAML, antoniou2019trainmaml, ANIL, DEML-variant,BOIL_variant, yao2022metalearningfewertaskstask, baik2021meta}, which employ
historical lower-level parameters $\phi_{i}^{k}(\theta),k\in\{0,\cdots,K\}$ for hypergradient estimation; see~Eq.~\eqref{chain} for details.
Our Algorithm~\ref{algorithm1} significantly reduces memory costs in each outer-loop iteration $t$, which is evidenced by our experimental results in Figure~\ref{fig2}. Moreover, the avoidance of long-distance backpropagation in Algorithm~\ref{algorithm1} also reduces the risk of exploding or vanishing gradients, leading to better learning accuracy than existing iterative-differentiation-based meta-learning approaches in~\citet{MAML,ANIL,Ji_2020_AID}, as evidenced by our experimental results in Figure~\ref{fig1} and Table~\ref{tab:test accuracy}.

Inspired by the linear acceleration capability of conjugate gradient approaches in solving the quadratic programming problem~\cite{grazzi2020iteration}, we leverage the conjugate gradient in our Subroutine~\ref{subroutine1}. However, different from the existing conjugate gradient-based algorithm in~\citet{Ji_2020_AID} which relies on deterministic gradients, we employ stochastic gradients, which complicates our convergence analysis.
\begin{remark}
To deal with the bilevel programming objectives (where upper-level optimality relies on lower-level optimality), we employ nested-loop iterations in Algorithm~\ref{algorithm1}. Note that nested-loop iterations are commonly used in meta-learning algorithms, such as~\citet{ANIL,imaml}. The iteration numbers $K$ and $N$ in Algorithm~\ref{algorithm1} are fixed constants, which are independent of the outer-loop iteration number $T$. This is different from existing bilevel programming approaches in~\citet{chen2022single,chen2021closing} which have the inner-loop iteration number increasing with the outer-loop iteration, and hence have a heavier computational overhead. In addition, different from existing meta-learning algorithms in~\citet{hiller2022enforcing-hessian1, park2019meta-hessian2} which estimate the full Hessian matrix or Jacobian matrix, Algorithm~\ref{algorithm1} only estimates a vector of dimension $\max\{p,q\}$, and thus reduces computational complexity.
\end{remark}

\section{Convergence Analysis}\label{Section-4}
Algorithm~\ref{algorithm1} can ensure sublinear convergence with the number $T$ of outer-loop iteration, and the convergence error decreases sublinearly with the batch size of the sampled tasks. The results are summarized in Theorem~\ref{theorem1}, whose proof is given in Section~B of the supplementary material.

\begin{theorem}\label{theorem1}
Under Assumptions~\ref{assumption1} and~\ref{assumption2}, if the iteration numbers $K$ and $N$ satisfy $K \geq K_{0}$ and $N \geq N_{0}$ with detailed forms of $K_{0}$ and $N_{0}$ given in Section B.2 of the supplementary material, the iterates $\theta_{t}$ generated by Algorithm~\ref{algorithm1} satisfy
\begin{equation*}
\frac{1}{T+1}\!\sum_{t = 0}^{T}\mathbb{E}[{\left\|\nabla F
(\theta_{t})\right\|}^ {2}] \!\leq\! \mathcal{O}\left(\frac{1}{T}\right)+\mathcal{O}\left(\frac{1}{|\mathcal{B}|}\right).
\end{equation*}
\end{theorem}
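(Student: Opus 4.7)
The plan is to prove Theorem 1 by a descent-lemma analysis for nonconvex stochastic optimization, coupled with a careful decomposition of the hypergradient-estimation error into three components. First I would establish that the population objective $F$ is $L_F$-smooth: Assumption~\ref{assumption1} gives $\|[\nabla_\phi^2 g_i]^{-1}\|\leq 1/\mu$, and the implicit function theorem applied to $\nabla_\phi g_i(\theta,\phi_i^*(\theta))=0$ combined with the Lipschitz bounds in Assumption~\ref{assumption2}(i) yields a Lipschitz constant for $\phi_i^*(\theta)$ in $\theta$. Plugging these into expression~\eqref{hypergradientestimate} for $\nabla F$ gives the $L_F$-smoothness of $F$. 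Applying the descent lemma to $F(\theta_{t+1})$ along the update $\theta_{t+1}=\theta_t-\lambda_\theta\nabla\widehat{F}_\mathcal{B}(\theta_t)$ and separating the cross-term via $\langle a,b\rangle\geq\frac{1}{2}\|a\|^2-\frac{1}{2}\|a-b\|^2$ with $\lambda_\theta\leq 1/(4L_F)$ yields a one-step inequality
\begin{equation*}
\mathbb{E}[F(\theta_{t+1})]\leq \mathbb{E}[F(\theta_t)]-\frac{\lambda_\theta}{4}\mathbb{E}\|\nabla F(\theta_t)\|^2+\lambda_\theta\,\mathbb{E}\|\nabla\widehat{F}_\mathcal{B}(\theta_t)-\nabla F(\theta_t)\|^2.
\end{equation*}

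The centerpiece is then to bound $\mathbb{E}\|\nabla\widehat{F}_\mathcal{B}(\theta_t)-\nabla F(\theta_t)\|^2$ by three pieces. The sampling error $\mathbb{E}\|\nabla F_\mathcal{B}(\theta_t)-\nabla F(\theta_t)\|^2$ is $\mathcal{O}(1/|\mathcal{B}|)$ via the bounded per-task variances $\sigma_{f1}^2,\sigma_{g1}^2,\sigma_{g2}^2$, after using the Lipschitz properties and the uniform bound $\|v_i^*\|\leq l_{f,0}/\mu$ to propagate the noise through the quadratic form $\nabla_{\theta\phi}^2 g_i\, v_i^*$. The inner-loop error $\|\phi_{i,t}^K-\phi_i^*(\theta_t)\|^2$ contracts geometrically as $(1-\lambda_\phi\mu)^{2K}\|\phi_{i,t}^0-\phi_i^*(\theta_t)\|^2$ by the classical analysis of gradient descent on $\mu$-strongly convex, $l_g$-smooth objectives, and the warm-start $\phi_{i,t}^0=\phi_{i,t-1}^K$ combined with the Lipschitzness of $\phi_i^*$ in $\theta$ and the bounded step $\lambda_\theta\|\nabla\widehat{F}_\mathcal{B}\|$ keeps this quantity uniformly bounded over $t$ by induction. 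The CG-subroutine error $\|v_{i,t}^N-v_i^*\|^2$ I would split as the CG error for the surrogate linear system $H_iv=b_i$ with $H_i=\nabla_\phi^2 g_i(\theta_t,\phi_{i,t}^K)$ and $b_i=\nabla_\phi f_i(\theta_t,\phi_{i,t}^K)$---controlled by the classical CG rate $((\sqrt{\kappa}-1)/(\sqrt{\kappa}+1))^{2N}$ with $\kappa=l_g/\mu$---plus a perturbation term $\|H_i^{-1}b_i-v_i^*\|^2$ that is $\mathcal{O}(\|\phi_{i,t}^K-\phi_i^*(\theta_t)\|^2)$ via Lipschitzness of $[\nabla_\phi^2 g_i]^{-1}$ (using $l_{g,2}$) and of $\nabla_\phi f_i$ (using $l_f$). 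Choosing $K\geq K_0$ and $N\geq N_0$ with explicit logarithmic thresholds drives both contributions to within $\mathcal{O}(1/|\mathcal{B}|)$.

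Telescoping the one-step descent inequality from $t=0$ to $T$, dividing by $T+1$, and rearranging yields
\begin{equation*}
\frac{1}{T+1}\sum_{t=0}^{T}\mathbb{E}\|\nabla F(\theta_t)\|^2\leq\frac{4(F(\theta_0)-F^*)}{\lambda_\theta(T+1)}+\mathcal{O}\!\left(\frac{1}{|\mathcal{B}|}\right),
\end{equation*}
which is exactly the claimed bound. The main obstacle is the coupled handling of the nested approximation errors: the CG subroutine solves the perturbed system $(H_i,b_i)$ rather than the ideal system at $\phi_i^*(\theta_t)$, so the CG target itself drifts with the inner-loop error, and this perturbation must be threaded through the CG convergence bound without inflating constants. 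A second subtlety is the warm-start $v_{i,t}^0=v_{i,t-1}^N$, which couples successive outer iterations; I would handle it by an induction showing $\|v_{i,t}^0-v_i^*\|$ stays uniformly bounded, exploiting the Lipschitzness of $v_i^*$ in $\theta$ and the boundedness of $\lambda_\theta\|\nabla\widehat{F}_\mathcal{B}\|$. Compared to standard bilevel-programming analyses, the task-heterogeneous optima $\phi_i^*(\theta)$ emphasized in Remark~1 prevent reusing a common lower-level solution and force both the inner-loop and CG error bounds to be derived per task before aggregation over $\mathcal{B}$.
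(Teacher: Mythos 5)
Your overall skeleton---$L_F$-smoothness of $F$, the descent lemma along $\theta_{t+1}=\theta_t-\lambda_\theta\nabla\widehat F_{\mathcal B}(\theta_t)$, and a three-way decomposition of $\mathbb{E}\|\nabla\widehat F_{\mathcal B}(\theta_t)-\nabla F(\theta_t)\|^2$ into sampling error, inner-loop error, and CG error---matches the paper's proof (its Lemmas~\ref{khfd}, \ref{lemma3}, \ref{lemma4}, \ref{lemm5}). The gap is in how you propose to close the argument. You claim that choosing $K\ge K_0$, $N\ge N_0$ ``drives both contributions to within $\mathcal O(1/|\mathcal B|)$'' per iteration, and that the warm starts can be handled by ``an induction showing uniform boundedness.'' Neither suffices. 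Because of the warm starts $\phi_{i,t}^0=\phi_{i,t-1}^K$ and $v_{i,t}^0=v_{i,t-1}^N$, the error at outer iteration $t$ contains terms proportional to $\|\theta_t-\theta_{t-1}\|^2=\lambda_\theta^2\|\nabla\widehat F_{\mathcal B}(\theta_{t-1})\|^2$, hence to $\mathbb{E}\|\nabla F(\theta_{t-1})\|^2$, which is \emph{not} $\mathcal O(1/|\mathcal B|)$ (it can be $\Theta(1)$ early in training, and with constant $K,N$ no amount of contraction removes a term that is re-injected every iteration). Conversely, mere uniform boundedness of the warm-started errors gives only an $\mathcal O(1)$ floor in the averaged bound, not $\mathcal O(1/T)+\mathcal O(1/|\mathcal B|)$.

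What is actually needed---and what the paper does---is a coupled Lyapunov recursion: defining $\Xi_t$ as a weighted combination of the CG residual $\mathbb{E}\|\zeta^N_{i,t}\|^2$ and the inner-loop error $\mathbb{E}\|\phi^0_{i,t}-\phi^*_{i,t}\|^2$, one derives
\begin{equation*}
\Xi_{t+1}\le \rho\,\Xi_t+D_1\,\mathbb{E}\big[\|\nabla F(\theta_t)\|^2\big]+\frac{D_2}{|\mathcal B|},\qquad \rho<1,
\end{equation*}
where the role of $K\ge K_0$ and $N\ge N_0$ is precisely to make $\rho<1$ and $D_1$ small, not to make the per-iteration error $\mathcal O(1/|\mathcal B|)$. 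This recursion is then summed \emph{jointly} with the descent inequality, and the accumulated feedback term $D_1\sum_t\mathbb{E}\|\nabla F(\theta_t)\|^2$ is absorbed into the left-hand side by the small-stepsize condition (the paper's condition $\tfrac{\lambda_\theta}{2}-\lambda_\theta^2-D_1\ge\tfrac{\lambda_\theta}{4}$ together with $2\lambda_\theta+2\lambda_\theta^2L_F+\rho\le 1$). Only after this absorption does the geometric sum of the $D_2/|\mathcal B|$ forcing terms produce the $\mathcal O(1/|\mathcal B|)$ error and the telescoped $F(\theta_0)-\inf_\theta F$ produce the $\mathcal O(1/T)$ term. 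Your plan identifies the right error sources but is missing this absorption step, without which the claimed rate does not follow.
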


Theorem~\ref{theorem1} proves that Algorithm~\ref{algorithm1} converges to a stable solution to problem~\eqref{primal} with the optimization error decreasing as the batch size of sampled tasks increases. We would like to point out that the bound $\mathcal{O}(\frac{1}{|\mathcal{B}|})$ in Theorem~\ref{theorem1}, caused by finite batch sizes of sampled tasks, inherently exists in all stochastic optimization approaches with finite samples~\cite{gower2019sgd_finite-sample}. Although the variance-reduction technique~\cite{reddi2016stochastic-variance_reduction, fang2018spider-variance_reduction} can be used to mitigate the influence of this term in single-level stochastic optimization, the extension of this approach to meta-learning/bilevel programming is hard to implement, since it is difficult to derive unbiased estimators of hypergradient, letting alone variance reduction ones; see~\citet{dagreou2022framework-variance_reduction} for details.

Moreover, to give a more intuitive description of the computational complexity, we define an $\epsilon$-solution to problem~\eqref{primal} as follows.
\begin{definition}\cite{lian2017can_decentralized}
For some positive integer $T$, if $\frac{1}{T}\sum_{t=0}^{T-1}\mathbb{E}[\|\nabla F(\theta_{t})\|^2]\leq \epsilon$ holds, then we say that the sequence $\{\theta_{t}\}_{t=0}^{T}$ can reach an $\epsilon$-solution to problem~\eqref{primal}.
\end{definition}

Building on Theorem~\ref{theorem1}, we have the following corollary.
\begin{corollary}\label{corollary1}
Under the conditions of Theorem~\ref{theorem1}, for any $\epsilon>0$, Algorithm~\ref{algorithm1} requires at most $\mathcal{O}((3+2K_{0})|\mathcal{B}|\epsilon^{-1})$ gradient evaluations on $\phi$ and $\mathcal{O}((1+K_{0}|\mathcal{B}|)\epsilon^{-1})$ gradient evaluations on $\theta$ to obtain an $\epsilon$-solution.
\end{corollary}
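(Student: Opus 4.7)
The plan is to derive the corollary as an immediate consequence of Theorem~\ref{theorem1} combined with a bookkeeping of gradient-oracle calls per outer-loop iteration of Algorithm~\ref{algorithm1}; no new convergence analysis is needed.

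First, I would invoke Theorem~\ref{theorem1} in the form
\begin{equation*}
\frac{1}{T+1}\sum_{t=0}^{T}\mathbb{E}[\|\nabla F(\theta_t)\|^2] \leq \frac{C_1}{T} + \frac{C_2}{|\mathcal{B}|},
\end{equation*}
where $C_1,C_2$ are constants depending only on the problem parameters in Assumptions~\ref{assumption1} and~\ref{assumption2}. Since the corollary treats $|\mathcal{B}|$ as a free parameter that is carried into the final complexity, it suffices to pick $T = \lceil 2C_1/\epsilon\rceil = \mathcal{O}(\epsilon^{-1})$ to obtain the $\epsilon^{-1}$ factor that multiplies the per-iteration cost in both bounds. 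This is the only use of the convergence rate; the remainder of the proof is arithmetic.

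Second, I would enumerate the oracle cost within a single outer iteration $t$. The inner loop on lines~7--9 issues $K$ evaluations of $\nabla_\phi g_i$ per sampled task, contributing $K|\mathcal{B}|$ $\phi$-gradient calls. Subroutine~\ref{subroutine1} then contributes one evaluation of $\nabla_\phi f_i$ at initialization and one Hessian-vector product per inner iteration (plus one at initialization); by the standard Pearlmutter-style observation, each Hessian-vector product $\nabla_\phi^2 g_i\cdot p$ can be implemented through a single additional backward pass of $\nabla_\phi g_i$, and is counted as an $\mathcal{O}(1)$ number of $\phi$-gradient evaluations. Analogously, each Jacobian-vector product $\nabla_{\theta\phi}^2 g_i\cdot v_{i,t}^{N}$ on line~11 is counted as an $\mathcal{O}(1)$ number of $\theta$-gradient evaluations, and line~13 contributes one $\theta$-gradient evaluation of $f_{\mathcal{B}}$.

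Plugging in the minimum admissible constants $K=K_0$ and $N=N_0$ supplied by Theorem~\ref{theorem1} and multiplying the per-iteration tallies by $T=\mathcal{O}(\epsilon^{-1})$, the $\phi$- and $\theta$-gradient totals collapse to the announced bounds $\mathcal{O}((3+2K_0)|\mathcal{B}|\epsilon^{-1})$ and $\mathcal{O}((1+K_0|\mathcal{B}|)\epsilon^{-1})$, respectively, once $N_0$ and the Pearlmutter constants are absorbed into the big-$\mathcal{O}$. The only non-routine step is the accounting convention that amortizes each Hessian-vector and Jacobian-vector product to a constant number of first-order gradient evaluations; once this is agreed upon, Theorem~\ref{theorem1} immediately supplies the $\epsilon^{-1}$ scaling and the remainder is elementary.
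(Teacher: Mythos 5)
Your proposal is correct and follows essentially the same route as the paper: invoke Theorem~\ref{theorem1} to get $T=\mathcal{O}(\epsilon^{-1})$, tally the gradient-oracle calls in one outer iteration, and multiply. The only difference is bookkeeping convention --- you amortize each Hessian-vector and Jacobian-vector product to $\mathcal{O}(1)$ first-order evaluations via the Pearlmutter trick, whereas the paper assigns fixed counts ($3|\mathcal{B}|$ to Subroutine~\ref{subroutine1} and $K_{0}|\mathcal{B}|$ to the Jacobian-vector step) --- but both tallies land on the stated orders, and like the paper you leave implicit that the $\mathcal{O}(1/|\mathcal{B}|)$ term must also be driven below $\epsilon$.
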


In Corollary~\ref{corollary1}, the inner-loop iteration number $K_{0}$ in Algorithm~\ref{algorithm1} is a fixed constant, which is different from the existing bilevel programming approaches in~\citet{chen2022single,chen2021closing} which have the inner-loop iteration number increasing with the outer-loop iteration, and hence have a higher computational complexity of the order of $\mathcal{O}(\epsilon^{-2})$. Moreover, the computational complexity of our Algorithm~\ref{algorithm1} matches the convergence results for MAML and ANIL in~\citet{ji2020convergence}, even when we use less memory in each outer iteration.

In Theorem~\ref{theorem1}, we consider a stochastic scenario in which tasks are drawn from an unknown task distribution $\mathcal{P}(\mathcal{T})$. Next, we consider a deterministic scenario in which we iterate over all sample elements in the task space. In this case, Assumption~\ref{assumption2} becomes a deterministic version as follows.

\begin{assumption}\label{assumption3}
For each task $\mathcal{T}_i$, functions $f_i$, $\nabla f_i$,
$\nabla g_i$, $\nabla_{\theta\phi}^2g_i$, and $\nabla_{\phi}^2g_i$
are $l_{f,0}$, $l_{f}$, $l_{g}$, $l_{g,1}$, and $l_{g,2}$
Lipschitz continuous, respectively.
\end{assumption}

\begin{figure*}[t]
\centering
\includegraphics[scale=0.40]{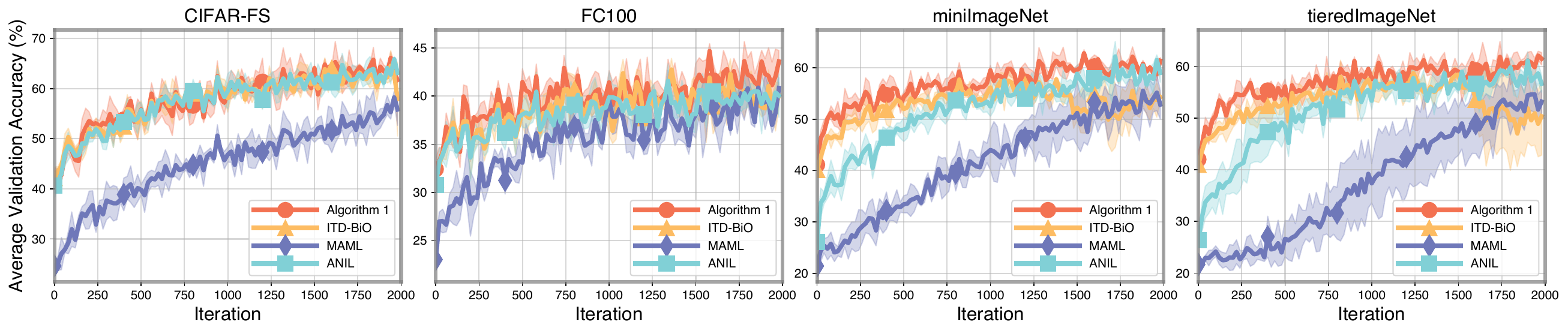}
\caption{Validation accuracies of Algorithm~\ref{algorithm1}, MAML, ANIL, and ITD-BiO in meta-learing stage using the ``CIFAR-FS", ``FC100", ``miniImageNet", ``tieredImageNet" datasets, respectively.}
\label{fig1}
\end{figure*}
\begin{table*}[t]
\centering
\begin{threeparttable}
\begin{tabular}{lcccccc}
\toprule
& & \textbf{CIFAR-FS} & \textbf{FC100} & & \textbf{miniImageNet} & \textbf{tieredImageNet} \\
\textbf{Algorithm} & \textbf{Backbone}\tnote{a} & \textbf{Accuracy(\%)} & \textbf{Accuracy(\%)} & \textbf{Backbone} & \textbf{Accuracy(\%)} & \textbf{Accuracy(\%)}\\
\midrule
MAML & 64-64-64-64 &  {54.85 $\pm$ 1.23}  & {47.50 $\pm$ 1.06}&32-32-32-32 &{43.33 $\pm$ 0.95}&{48.37 $\pm$ 1.17}\\
ANIL & 64-64-64-64 & {63.52 $\pm$ 1.31} & {47.70 $\pm$ 0.95} &32-32-32-32 &{57.80 $\pm$ 1.14}&{58.16 $\pm$ 0.94}\\
ITD-BiO & 64-64-64-64 & 63.69 $\pm$ 1.10 & {47.88 $\pm$ 0.74} &32-32-32-32 &{55.58 $\pm$ 1.31}&{60.78 $\pm$ 0.10}\\
\textbf{Algorithm 1 (ours)} & 64-64-64-64 & \textbf{64.24 $\pm$ 1.41} & \textbf{48.52 $\pm$ 1.13}&32-32-32-32 &\textbf{{59.58 $\pm$ 1.08}}&\textbf{{61.97 $\pm$ 0.86}}\\
\bottomrule
\end{tabular}
\begin{tablenotes}
\footnotesize
\item[a] The ``Backbone" represents the remaining layers in the model except for the last linear layer. The numbers in ``64-64-64-64'' and ``32-32-32-32'' represent the number of filters in each convolutional layer in the backbone.
\end{tablenotes}
\caption{Test accuracies by fine-tuning the model learned by Algorithm~\ref{algorithm1}, MAML, ANIL, and ITD-BiO using $20$-step gradient descent on the ``CIFAR-FS", ``FC100", ``miniImageNet", ``tieredImageNet" datasets, respectively.
}
\label{tab:test accuracy}
\end{threeparttable}
\end{table*}

\begin{theorem}\label{theorem2}
Under Assumptions~\ref{assumption1} and~\ref{assumption3}, if the iteration numbers $K$ and $N$ satisfy $K\geq \mathcal{O}(\kappa)$ and $N \geq\mathcal{O}\left(\sqrt{\kappa}\right)$ with $\kappa=\frac{l_{g}}{\mu}$, then the iterates $\theta_{t}$ generated by Algorithm~\ref{algorithm1} with deterministic gradients satisfy
\begin{equation*}
\frac{1}{T+1}\!\sum_{t = 0}^{T}{\left\|\nabla F
(\theta_{t})\right\|}^ {2} \leq\mathcal{O}\left(\frac{1}{T}\right).
\end{equation*}
\end{theorem}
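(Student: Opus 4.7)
The plan is to mirror the stochastic proof of Theorem~\ref{theorem1} but drop every term originating from gradient variance, so that only the bias from the finite inner-loop counts $K$ and $N$ remains; these biases will be forced to decay geometrically by the choices $K\ge \mathcal{O}(\kappa)$ and $N\ge \mathcal{O}(\sqrt{\kappa})$, yielding the clean $\mathcal{O}(1/T)$ rate without any residual $\mathcal{O}(1/|\mathcal{B}|)$ floor.

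First, I would invoke standard results to establish that, under Assumptions~\ref{assumption1} and~\ref{assumption3}, the composite upper-level objective $F$ is $L_F$-smooth with $L_F$ polynomial in $l_{f,0},l_f,l_g,l_{g,1},l_{g,2}$ and $\kappa=l_g/\mu$; this is the deterministic analogue of the smoothness lemma used in the stochastic proof. Then the descent inequality
\begin{equation*}
F(\theta_{t+1})\le F(\theta_{t})-\lambda_{\theta}\langle \nabla F(\theta_{t}),\nabla\widehat{F}_{\mathcal{B}}(\theta_{t})\rangle+\tfrac{L_F\lambda_{\theta}^{2}}{2}\|\nabla\widehat{F}_{\mathcal{B}}(\theta_{t})\|^{2}
\end{equation*}
holds. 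Using the identity $\langle a,b\rangle=\tfrac12\|a\|^2+\tfrac12\|b\|^2-\tfrac12\|a-b\|^2$ with $a=\nabla F(\theta_t)$ and $b=\nabla\widehat{F}_{\mathcal{B}}(\theta_t)$, this rearranges into an inequality of the form
\begin{equation*}
F(\theta_{t+1})\le F(\theta_{t})-c_{1}\lambda_{\theta}\|\nabla F(\theta_{t})\|^{2}+c_{2}\lambda_{\theta}\|\nabla\widehat{F}_{\mathcal{B}}(\theta_{t})-\nabla F(\theta_{t})\|^{2},
\end{equation*}
for suitable constants $c_1,c_2>0$ once $\lambda_\theta$ is taken small enough.

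Next, I would bound the hypergradient bias $\|\nabla\widehat{F}_{\mathcal{B}}(\theta_{t})-\nabla F(\theta_{t})\|^{2}$ by splitting it into (i) the inner-loop error $\max_i\|\phi_{i,t}^{K}-\phi_i^{*}(\theta_t)\|$ and (ii) the conjugate-gradient error $\max_i\|v_{i,t}^{N}-v_i^{*}\|$, exactly as in the stochastic proof but with the variance terms $\sigma_{f1}^2,\sigma_{g1}^2,\sigma_{g2}^2$ set to zero. Because in the deterministic setting each $g_i$ is $\mu$-strongly convex and $l_g$-smooth, gradient descent in the inner loop contracts geometrically and yields $\|\phi_{i,t}^{K}-\phi_i^{*}(\theta_t)\|^2\le (1-\mu\lambda_\phi)^{2K}\|\phi_{i,t}^{0}-\phi_i^{*}(\theta_t)\|^2$. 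For the CG subroutine, I would invoke the classical optimal contraction rate for linear CG on the quadratic~\eqref{Hessian-v-p} whose Hessian $H_i=\nabla_\phi^2 g_i$ has condition number at most $\kappa$, giving
\begin{equation*}
\|v_{i,t}^{N}-v_i^{*}\|\le 2\Bigl(\tfrac{\sqrt{\kappa}-1}{\sqrt{\kappa}+1}\Bigr)^{N}\|v_{i,t}^{0}-v_i^{*}\|,
\end{equation*}
which is precisely why the threshold $N_0=\mathcal{O}(\sqrt{\kappa})$ suffices (a square-root improvement over plain gradient descent on the quadratic). Here the warm-start $v_{i,t}^{0}=v_{i,t-1}^{N}$ only contributes a bounded drift, handled by the $L_F$-smoothness of the hypergradient and the boundedness of the outer-loop step.

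Combining the two contraction estimates, I would pick $K\ge K_0=\mathcal{O}(\kappa)$ and $N\ge N_0=\mathcal{O}(\sqrt{\kappa})$ large enough that $c_2\lambda_\theta\|\nabla\widehat{F}_{\mathcal{B}}(\theta_{t})-\nabla F(\theta_{t})\|^{2}\le \tfrac{c_1\lambda_\theta}{2}\|\nabla F(\theta_t)\|^2+\delta$ with an arbitrarily small $\delta$; absorbing the first piece into the negative $\|\nabla F(\theta_t)\|^2$ term yields a net-descent inequality. Summing from $t=0$ to $T$ and telescoping with $F$ lower-bounded then delivers $\tfrac{1}{T+1}\sum_{t=0}^{T}\|\nabla F(\theta_t)\|^2\le \mathcal{O}(1/T)$. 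The main obstacle I anticipate is the warm-start coupling between successive outer iterations in the CG subroutine: a naive bound on $\|v_{i,t}^{0}-v_i^{*}\|$ depends on $\|\theta_t-\theta_{t-1}\|$ and thus on past hypergradients, so I would need an induction that simultaneously controls $\|v_{i,t}^{N}-v_i^{*}\|^2$ and the outer-loop increment, exploiting that the linear CG rate at exponent $N_0=\Theta(\sqrt{\kappa})$ beats any constant growth introduced by the warm-start drift.
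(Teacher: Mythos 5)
Your proposal follows essentially the same route as the paper: the $L_F$-smoothness descent inequality, the split of the hypergradient bias into the lower-level gradient-descent error (contracting at rate $(1-\mu\lambda_\phi)^K$) and the conjugate-gradient error (contracting at the classical $\bigl(\tfrac{\sqrt{\kappa}-1}{\sqrt{\kappa}+1}\bigr)^{N}$ rate, whence $N_0=\mathcal{O}(\sqrt{\kappa})$), and a coupled recursion handling the warm-start drift — which is exactly the paper's Lyapunov argument $\Xi_{t+1}\le\rho\,\Xi_t+D_1\|\nabla F(\theta_t)\|^2$ with $\rho<1$. The only caveat is that your intermediate ``$+\,\delta$'' must actually be zero (not merely arbitrarily small) to avoid a constant floor after averaging, and your final induction step does deliver this since in the deterministic case every error term is proportional to $\|\nabla F(\theta_t)\|^2$ or to the contracting Lyapunov quantity.
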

Theorem~\ref{theorem2} demonstrates that when we consider deterministic meta-learning, Algorithm~\ref{algorithm1} converges to an exact solution to problem~\eqref{primal} with a sublinear convergence rate. To achieve an $\epsilon$-solution, Algorithm~\ref{algorithm1} with deterministic gradients requires at most $\mathcal{O}(\kappa\epsilon^{-1})$ gradient evaluations in both $\phi$ and $\theta$, which is consistent with the convergence results for the iterative-differentiation-based meta-learning algorithm in~\citet{Ji_2020_AID}.

\section{Experiments}\label{Section-5}
In this section, we evaluate the performance of our proposed Algorithm~\ref{algorithm1} by using a few-shot image classification problem on the ``CIFAR-FS" dataset~\cite{Bertinetto_2018_bi_variant}, the ``FC100" dataset~\cite{oreshkin2018tadam}, the ``miniImageNet" dataset~\cite{vinyals2016matching-metric1}, and the ``tieredImageNet" dataset~\cite{ren2018metalearningsemisupervisedfewshotclassification}, respectively. In all experiments, we compare our Algorithm~\ref{algorithm1} with other optimization-based meta-learning approaches, including the bilevel optimization-based algorithm with iterative differentiation (ITD-BiO) in~\citet{Ji_2020_AID}, MAML in~\citet{MAML}, and ANIL in~\citet{ANIL}. Due to space limitations, we leave the experimental setup in neural network training in Appendix~C.2.

\begin{figure*}[t]
\centering
\includegraphics[scale=0.4]{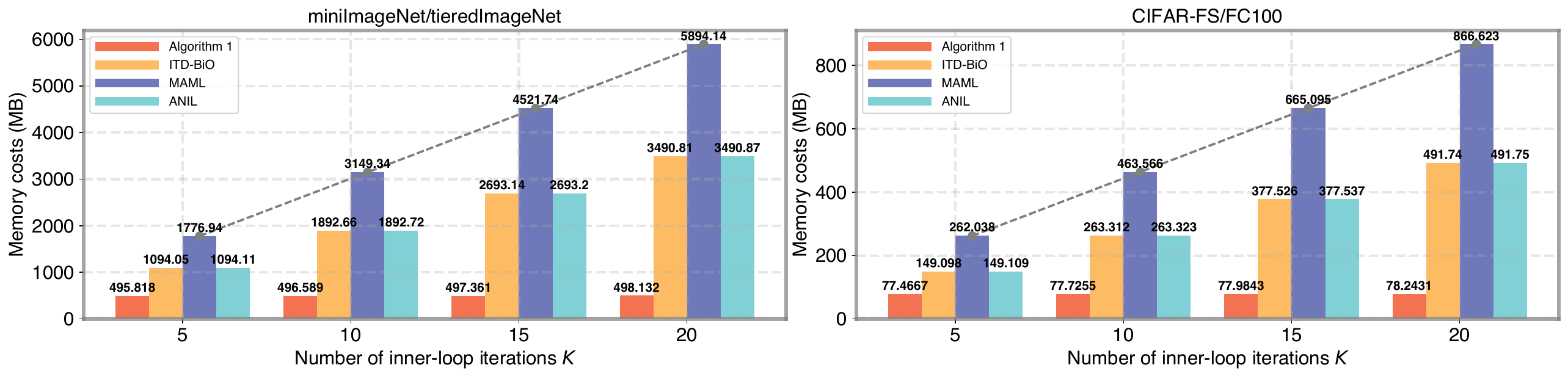}
\caption{Memory costs of Algorithm~\ref{algorithm1}, MAML, ANIL, and ITD-BiO under different numbers of inner-loop iterations $K$ using the ``CIFAR-FS", ``FC100", ``miniImageNet", ``tieredImageNet" datasets, respectively.}
\label{fig2}
\end{figure*}
\begin{table*}[t]
\centering
\begin{threeparttable}
\begin{tabular}{lcccc}
\toprule
& \multicolumn{1}{c}{\textbf{CIFAR-FS}} & \multicolumn{1}{c}{\textbf{FC100}} & \multicolumn{1}{c}{\textbf{miniImageNet}} & \multicolumn{1}{c}{\textbf{tieredImageNet}}\\
\textbf{Algorithm} & wallclock time(s)\tnote{a} & wallclock time(s) & wallclock time(s) & wallclock time(s)\\
\midrule
\textbf{MAML}& --\tnote{b} & 2638.45 & 10065.83 & 9453.75\\
\textbf{ANIL}& {1130.16} & 879.33 & 1639.85 & 1417.64\\
\textbf{ITD-BiO}& 1740.76 & 1607.74 & 3031.51 & 2059.43\\
\textbf{Algorithm 1 (ours)}& \textbf{832.40} & \textbf{621.67} & \textbf{739.62} & \textbf{693.46}\\
\bottomrule
\end{tabular}
\begin{tablenotes}
\footnotesize
\item[a] The wallclock time represents the time spent to achieve a certain validation accuracy of $0.90\times Acc_{\max}$, where $Acc_{\max}$ denotes the highest validation accuracy achieved among all comparison algorithms.
\item [b] Since MAML can never reach $0.90 \times Acc_{\max}$, we have not provided its wallclock time here.
\end{tablenotes}
\end{threeparttable}
\caption{
Comparison with MAML, ANIL, and ITD-BiO in terms of wallclock time using the ``CIFAR-FS", ``FC100", ``miniImageNet", ``tieredImageNet" datasets, respectively.
}
    \label{tab:metrics}
\end{table*}

\subsection{Evaluation on Meta-Learning Accuracy}
Following~\citet{MAML}, we consider a meta-learning problem with $|\mathcal{B}|$ tasks $\{\mathcal{T}_{i}, i=1,\cdots,|\mathcal{B}|\}$ in each iteration. Each task $\mathcal{T}_{i}$ has a loss function $\mathcal{L}(\theta,\phi_{i};\xi)$ over each data sample $\xi$, where $\theta$ represents the parameters of an embedding model shared by all tasks, and $\phi_{i}$ represents the task-specific parameters in a given task $\mathcal{T}_{i}$. The goal of our meta-learning problem~\eqref{primal} is to find the optimal parameters $\theta^*$ that benefit all tasks, and based on $\theta^*$, the model can quickly adapt its own parameters $\phi_{i}$ to any new task $\mathcal{T}_{i}$ using only a few data points and training iterations.

In this experiment, all comparative meta-learning algorithms are trained using a four-convolutional-layer convolutional neural network (CNN) architecture given in~\citet{ravi2016optimization}. The network is trained using a cross-entropy loss function in PyTorch. For our Algorithm~\ref{algorithm1} and ITD-BiO in~\citet{Ji_2020_AID}, $\phi_{i}$ corresponds to the parameters of the last linear layer of the CNN model and $\theta$ represents the parameters of the remaining layers. This setup ensures that the lower-level objective function $g_{i}(\theta,\phi)$ is strongly convex with respect to $\phi$, while the upper-level objective function $F(\theta)$ is generally nonconvex with respect to $\theta$.
For MAML and ANIL, we employ their default initialization-based learning paradigms given in~\citet{MAML} and~\citet{ANIL}, respectively.
All algorithms are executed in $|\mathcal{B}|=32$ batches of tasks over $2,000$ iterations with each task involving a training dataset $\mathcal{D}_{i}^{\text{tra}}$ and a validation dataset $\mathcal{D}_{i}^{\text{val}}$, both designed for $5$-way classification with $5$ shots for each class.
For the experiments conducted on the ``CIFAR-FS'' and ``FC100'' datasets, we set the step sizes to $\lambda_{\theta}=0.0001$ and $\lambda_{\phi}=0.1$; and for the experiments conducted on the ``miniImageNet'' and ``tieredImageNet'' datasets, we set $\lambda_{\theta}=0.001$ and $\lambda_{\phi}=0.05$.
The iteration numbers $K$ and $N$ are configured as $K=N=20$.
In our comparison, the near-optimal stepsizes and inner-loop iteration numbers are applied to ITD-BiO, MAML and ANIL, such that doubling them leads to divergent behavior.
Moreover, for ITD-BiO, MAML, and ANIL, we used Adam~\cite{kingma2017adammethodstochasticoptimization} for upper-level optimization and gradient descent for lower-level optimization, following the guidance from~\citet{Ji_2020_AID},~\citet{MAML}, and~\citet{ANIL}, respectively.

Note that evaluating the performance of a meta-learning algorithm involves two stages: learning-accuracy evaluation in the meta-learning stage and generalization-ability evaluation in the meta-test stage. The learning accuracy in the meta-learning stage is shown in Figure~\ref{fig1}. These results confirm the advantage of the proposed algorithm over existing meta-learning algorithms on various datasets. To show that Algorithm~\ref{algorithm1}'s generalization ability to a new task which is comprised of some images unseen in the meta-learning stage, we list the test accuracies by fine-tuning the model learned by Algorithm~\ref{algorithm1}, MAML, ANIL, and ITD-BiO using a $20$-step gradient descent. The results are summarized in Table~\ref{tab:test accuracy}, which shows that our proposed Algorithm~\ref{algorithm1} has better generalization ability than existing counterparts.

\subsection{Evaluation on Memory Costs}
In optimization-based meta-learning algorithms, a significant portion of memory costs come from storing historical lower-level parameters/gradients for hypergradient estimation (see Eq.\eqref{chain}). To show that Algorithm~\ref{algorithm1} can ensure nearly time-invariant memory usage over iterations, we compare the memory needed under different numbers of inner-loop iterations. The results in Figure~\ref{fig2} clearly show that the memory costs of Algorithm~\ref{algorithm1} remain consistently stable regardless of the number of inner-loop iterations. In contrast, the memory costs for MAML, ANIL, and ITD-BiO increase with the number of inner-loop iterations. Moreover, even when the number of inner-loop iterations is set to $5$, the memory consumption of our algorithm is more than $50\%$ lower compared to MAML, ANIL, and ITD-BiO.

\subsection{Evaluation on Wallclock Time}
To compare the convergence rate, i.e., computational complexity, of our Algorithm~\ref{algorithm1} with existing counterparts, we summarize their wallclock time to achieve a certain $0.90\times Acc_{\max}$ validation accuracy in Table~\ref{tab:metrics}, where $Acc_{\max}$ represents the highest validation accuracy achieved among all comparison algorithms. The experimental results validate that our algorithm is still faster than MAML, ANIL, and ITD-BiO in wallclock time even without using any historical lower-level parameters/gradients.

\section{Conclusion}\label{Section-6}
In this paper, we propose a meta-learning algorithm that can simultaneously reduce memory costs and ensure sublinear convergence. More specifically, our proposed approach avoids using any historical lower-level parameters/gradients, and hence, ensures nearly invariant memory costs over iterations. This is in sharp contrast to most existing iterative-differentiation-based algorithms for meta-learning, which use historical lower-level parameters/gradients to ensure learning accuracy, implying growing memory costs when the inner-loop iteration number increases. In addition, we systematically characterize the convergence performance of our algorithm for both stochastic
and deterministic meta-learning, and quantify the computational complexities for gradient evaluations on both upper-level and lower-level parameters. Experimental results on various benchmark datasets in few-shot meta-learning confirm the advantages of the proposed approach over existing counterparts.

\appendix
\section*{Technical Appendices}
\textbf{Outlines}
\begin{itemize}
\item Section~\ref{SectionS1}: Notations
\item Section~\ref{SectionS2}: Proof of Convergence for Algorithm~1
\begin{itemize}
\item~\ref{SectionS21} Technical Lemmas
\item~\ref{SectionS22} Proof of Theorem~1
\item~\ref{SectionS23} Proof of Corollary~1
\item~\ref{SectionS24} Proof of Theorem~2
\end{itemize}
\item Section~\ref{SectionS3}: Additional Experimental Results
\begin{itemize}
\item ~\ref{SectionS32} Benchmark Datasets Used in Experiments
\item ~\ref{SectionS33} Experimental Setup in Neural Network Training
\end{itemize}
\end{itemize}

\section{Notations}\label{SectionS1}
For the sake of clarity, the notations used in this paper are listed below.
\begin{itemize}
\item We define the true hypergradient at time $t$ as
\begin{flalign}
\nabla {F}(\theta_{t})&= \mathbb{E}_{\mathcal{T}_{i}\sim\mathcal{P}(\mathcal{T})}\left[\nabla_{\theta} f_{i}(\theta_t, {\phi}_{i,t}^{*})-\nabla_{\theta\phi}^2g_{i}(\theta_{t}, \phi_{i,t}^{*})\right.\nonumber\\
&\left.\quad\times \left(\nabla_{\phi}^{2}g_{i}(\theta_{t}, \phi_{i,t}^{*}\right)^{-1}\nabla_{\phi}f_{i}(\theta_{t}, \phi_{i,t}^{*})\right].
\end{flalign}
\item We use the first subscript (usually denoted as $i$) to represent the task index, and the second subscript (usually denoted as $t$) to represent the iteration index. For example, $x_{i,t}$ represents the variable $x$ of task $i$ at the $t$-th iteration. For the inner-loop iteration, $k$ represents the index of the inner-loop iteration. For example, $x_{i,t}^{k}$ denotes the variable $x$ of task $i$ at the $k$-th inner-loop iteration within the $t$-th outer-loop iteration.

\item We use $\mathbb{C}$ to represent the set of complex numbers. We use $\mathbb{R}$ and $\mathbb{R}^n$ to represent the set of real numbers, and the $n$-dimensional real space, respectively.
For a positive definite matrix $A\in\mathbb{R}^{n\times n}$ and a vector $x\in\mathbb{R}^n$, the matrix norm induced by
$A$ is defined as $\|x\|_{A}=\sqrt{x^TA^TAx}.$

\item Additional notations are introduced as follows: \\$A_{i,t}^{\mathcal{B}}\triangleq\nabla_{\phi}^{2}g_{i}(\theta_{t},\phi_{i,t}^{0})$, $b_{i,t}^{\mathcal{B}}\triangleq\nabla_{\phi}f_{i}(\theta_{t}, \phi_{i,t}^{0})$, $A^*_{i,t}\triangleq\nabla_{\phi}^{2}g_i(\theta_{t},\phi_{i,t}^{*})$, $b^*_{i,t}\triangleq\nabla_{\phi}f_i(\theta_{t},\phi_{i,t}^{*})$, $v^*_{i,t}\triangleq(A^{\mathcal{B}}_{i,t})^{-1}b^{\mathcal{B}}_{i,t}$, $\bar{v}^*_{i,t}\triangleq(A^*_{i,t})^{-1}b^*_{i,t}$, $\zeta^k_{i,t}=A^{\mathcal{B}}_{i,t}{e} ^k_{i,t}$, $\phi_{i,t}^*\triangleq\text{argmin}_{\phi\in\mathbb{R}^n}g_i(\theta_t,\phi)$
and $\psi=\max\{2\log(l_{f,0}),0\}+3\log(2)+\max\left\{\log\left(\frac{2\sigma^2_{f1}l^2_g}{\mu^2}+\frac{2\sigma^2_{g1}l^2_gl^2_{f,0}}{\mu^4}\right),0\right\}
+\max\{\log(36l_f^2),0\}. $

\end{itemize}
\section{Proof of Convergence for Algorithm 1}\label{SectionS2}
\subsection{Technical Lemmas}\label{SectionS21}

The following Lemma \ref{khfd} will establish the Lipschitz property of the gradient function $\nabla F$.

\begin{lemma}\label{khfd}
\cite{ghadimi2018approximationmethodsbilevelprogramming} Let $L_{\max}=\max\{l_f, l_g\}$. Under Assumptions~1 and~2, the Lipschitz continuous property of the upper-level objective function always holds for any $\theta$, $\theta'\in \mathbb{R}^p$:
\begin{equation}
\| \nabla F(\theta)-\nabla F(\theta') \| \le L_{F}\| \theta-\theta' \|,\nonumber
\end{equation}
where the positive constant $L_{F}$ is given by $L_{F}=L_{\max}+\frac{2L_{\max}^{2}+l_{g,1} l_{f,0}^{2}}{\mu}+\frac{L_{\max}^3+l_{f,0}L_{\max}(l_{g,1}+l_{g,2})}{\mu^2}+\frac{l_{g,2} L_{\max}^{2}l_{f,0}}{\mu^{3}}$.
\label{lemma-1}
\end{lemma}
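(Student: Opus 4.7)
The plan is to decompose $\nabla F(\theta)$ into its four constituent factors — $\nabla_{\theta}f_{i}(\theta,\phi_{i}^{*}(\theta))$, the Jacobian $\nabla_{\theta\phi}^{2}g_{i}(\theta,\phi_{i}^{*}(\theta))$, the inverse Hessian $[\nabla_{\phi}^{2}g_{i}(\theta,\phi_{i}^{*}(\theta))]^{-1}$, and $\nabla_{\phi}f_{i}(\theta,\phi_{i}^{*}(\theta))$ — bound the Lipschitz constant of each factor in $\theta$ separately, and then combine them via the triangle inequality. Since expectation does not inflate Lipschitz constants, it suffices to bound the per-task Lipschitz constant.

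The first step is to establish that $\theta\mapsto\phi_{i}^{*}(\theta)$ is Lipschitz. Applying the implicit function theorem to the stationarity condition $\nabla_{\phi}g_{i}(\theta,\phi_{i}^{*}(\theta))=0$ gives
\[
\nabla\phi_{i}^{*}(\theta)=-\bigl[\nabla_{\phi}^{2}g_{i}(\theta,\phi_{i}^{*}(\theta))\bigr]^{-1}\nabla_{\phi\theta}^{2}g_{i}(\theta,\phi_{i}^{*}(\theta)).
\]
The $\mu$-strong convexity of $g_{i}$ in $\phi$ (Assumption~\ref{assumption1}) yields $\|[\nabla_{\phi}^{2}g_{i}]^{-1}\|\le 1/\mu$, while the $l_{g}$-Lipschitzness of $\nabla g$ gives $\|\nabla_{\phi\theta}^{2}g_{i}\|\le L_{\max}$. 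Hence $\|\phi_{i}^{*}(\theta)-\phi_{i}^{*}(\theta')\|\le (L_{\max}/\mu)\|\theta-\theta'\|$.

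The second step is to bound the Lipschitz constant of each factor in $\theta$ using the composition rule $\|h(\theta,\phi_{i}^{*}(\theta))-h(\theta',\phi_{i}^{*}(\theta'))\|\le L_{h}(1+L_{\max}/\mu)\|\theta-\theta'\|$ whenever $h$ is $L_{h}$-Lipschitz jointly. This yields constants proportional to $l_{f}$ for $\nabla_{\theta}f_{i}$ and $\nabla_{\phi}f_{i}$, $l_{g,1}$ for $\nabla_{\theta\phi}^{2}g_{i}$, and $l_{g,2}$ for $\nabla_{\phi}^{2}g_{i}$. For the inverse Hessian I use the resolvent identity $A^{-1}-B^{-1}=A^{-1}(B-A)B^{-1}$ together with the $1/\mu$ bound on the inverse and the $l_{g,2}$-Lipschitzness of $\nabla_{\phi}^{2}g_{i}$ to obtain a Lipschitz constant of order $l_{g,2}/\mu^{2}$. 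The uniform norm bounds $\|\nabla_{\phi}f_{i}\|\le l_{f,0}$, $\|\nabla_{\theta\phi}^{2}g_{i}\|\le L_{\max}$, and $\|[\nabla_{\phi}^{2}g_{i}]^{-1}\|\le 1/\mu$ are recorded for use in the triple-product bound.

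The third step is to expand the difference of the triple product $\nabla_{\theta\phi}^{2}g_{i}\cdot[\nabla_{\phi}^{2}g_{i}]^{-1}\cdot\nabla_{\phi}f_{i}$ at $\theta$ and at $\theta'$ via the telescoping identity
\[
ABC-A'B'C' = (A-A')BC + A'(B-B')C + A'B'(C-C'),
\]
pair each difference with its per-factor Lipschitz constant from step two and each retained factor with its uniform norm bound, then add the contribution from $\nabla_{\theta}f_{i}$ via the triangle inequality. Collecting the four summands gives exactly the constant $L_{F}=L_{\max}+(2L_{\max}^{2}+l_{g,1}l_{f,0}^{2})/\mu+(L_{\max}^{3}+l_{f,0}L_{\max}(l_{g,1}+l_{g,2}))/\mu^{2}+l_{g,2}L_{\max}^{2}l_{f,0}/\mu^{3}$ stated in the lemma. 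The main obstacle is purely the algebraic bookkeeping required to match the explicit constant in the statement — in particular, properly accounting for the $1+L_{\max}/\mu$ factor that arises from the chain rule through $\phi_{i}^{*}(\theta)$ in every Lipschitz estimate and tracking which terms pick up an extra $1/\mu$ from the inverse Hessian. Since these manipulations reproduce the calculation in~\citet{ghadimi2018approximationmethodsbilevelprogramming}, no new techniques are required.
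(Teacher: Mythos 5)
The paper supplies no proof of its own for this lemma---it simply imports it from Ghadimi and Wang (2018)---and your reconstruction is precisely the standard argument behind that cited result (implicit-function-theorem Lipschitzness of $\phi_i^*$ with constant $L_{\max}/\mu$, the resolvent identity for the inverse Hessian, the telescoping decomposition of the triple product, and the uniform bounds $\|\nabla_\phi f_i\|\le l_{f,0}$, $\|\nabla^2_{\theta\phi}g_i\|\le L_{\max}$, $\|[\nabla^2_\phi g_i]^{-1}\|\le 1/\mu$), so the approach is the same and sound. The only minor remark is that carrying out your bookkeeping gives the term $l_{g,1}l_{f,0}/\mu$ rather than the $l_{g,1}l_{f,0}^{2}/\mu$ appearing in the stated constant, a discrepancy that lies in the paper's quoted expression for $L_F$ rather than in your argument.
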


Lemma~\ref{klda} quantifies the distance between $v^k_{i,t}$ and $v^*_{i,t}$, denoted as $\|v^k_{i,t} - v^*_{i,t}\|_{A^{\mathcal{B}}_{i,t}}$.
\begin{lemma}\label{klda}

There exists a decaying function  $\Gamma_k:\mathbb{N}\rightarrow\mathbb{R}$ satisfying
\begin{equation}
\Gamma_k=\Bigg\{\begin{aligned}
&a\gamma^k,~1\leq k<n,\\
&0,~~~~~k\geq n,
\end{aligned}
\end{equation}
with $a>1$ and $\gamma\in(0,1)$, such that the following inequality holds almost surely:
\begin{equation}\label{ahg}
\|v^k_{i,t} - v^*_{i,t}\|^2_{A^{\mathcal{B}}_{i,t}}\leq \Gamma_k\|v^0_{i,t} - v^*_{i,t}\|^2_{A^{\mathcal{B}}_{i,t}},
\end{equation}
for all $1\leq k\leq N$.
\end{lemma}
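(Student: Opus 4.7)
The plan is to recognize Subroutine~1 as the classical Fletcher--Reeves conjugate gradient (CG) iteration applied to the symmetric positive-definite linear system $A^{\mathcal{B}}_{i,t} v = b^{\mathcal{B}}_{i,t}$, whose unique solution is $v^{*}_{i,t}$ by the definition given in Section~\ref{SectionS1}. Once this identification is made, the two conclusions of the lemma follow from standard results in numerical linear algebra: exact finite termination in at most $q = n$ steps, and the Chebyshev-polynomial linear rate on the $A^{\mathcal{B}}_{i,t}$-norm error.

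First I would verify the identification line by line. The initialization $p^0_{i,t} = r^0_{i,t} = b^{\mathcal{B}}_{i,t} - A^{\mathcal{B}}_{i,t} v^0_{i,t}$, the exact line-search step $\eta^n_{i,t} = \|r^n_{i,t}\|^2 / (p^n_{i,t})^T h^n_{i,t}$ with $h^n_{i,t} = A^{\mathcal{B}}_{i,t} p^n_{i,t}$, the updates $v^{n+1} = v^n + \eta^n p^n$, $r^{n+1} = r^n - \eta^n h^n$, and the Fletcher--Reeves search direction $p^{n+1} = r^{n+1} + \zeta^n p^n$ with $\zeta^n = \|r^{n+1}\|^2/\|r^n\|^2$ coincide with textbook CG for the quadratic $\varphi(v)$ in~\eqref{Hessian-v-p}. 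Assumption~1 gives $A^{\mathcal{B}}_{i,t} \succeq \mu I$, and Assumption~2 together with the $l_g$-Lipschitz continuity of $\nabla g$ yields $A^{\mathcal{B}}_{i,t} \preceq l_g I$, so $A^{\mathcal{B}}_{i,t}$ is symmetric positive definite with condition number $\kappa \le l_g/\mu$ that does not depend on the realization of the sampled task.

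Second, I would apply the two classical CG bounds. Exact finite termination in $n$ dimensions gives $v^k_{i,t} = v^{*}_{i,t}$, hence $\|v^k_{i,t} - v^{*}_{i,t}\|^2_{A^{\mathcal{B}}_{i,t}} = 0$, for every $k \ge n$, which fixes the second branch of $\Gamma_k$. For $1 \le k < n$, CG minimizes the $A^{\mathcal{B}}_{i,t}$-norm error over the shifted Krylov subspace $v^0_{i,t} + \mathcal{K}_k(A^{\mathcal{B}}_{i,t}, r^0_{i,t})$, and comparison with the scaled Chebyshev polynomial on the spectral interval $[\mu, l_g]$ gives
\begin{equation*}
\|v^k_{i,t} - v^{*}_{i,t}\|_{A^{\mathcal{B}}_{i,t}} \;\le\; 2\left(\frac{\sqrt{\kappa}-1}{\sqrt{\kappa}+1}\right)^{k} \|v^0_{i,t} - v^{*}_{i,t}\|_{A^{\mathcal{B}}_{i,t}}.
\end{equation*}
Squaring and setting $a = 4$ and $\gamma = \bigl((\sqrt{\kappa}-1)/(\sqrt{\kappa}+1)\bigr)^{2} \in (0,1)$ produces~\eqref{ahg} with $a > 1$ as required. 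The ``almost surely'' qualifier is automatic because the bound is deterministic once the sampled task and the Hessian are fixed.

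The main obstacle is essentially bookkeeping rather than new mathematics: confirming that the stochastic subroutine really is Fletcher--Reeves CG in the $A^{\mathcal{B}}_{i,t}$ inner product (so that the energy-norm error, not the Euclidean one, is what is being contracted), and verifying that the spectral bounds from Assumptions~1--2 deliver a task-uniform condition number $\kappa$ so that the same constants $(a,\gamma)$ serve for every realization. Once these identifications are in place, the lemma reduces to the textbook CG convergence theorem, which is why the statement only asserts existence of the constants $a > 1$ and $\gamma \in (0,1)$ without pinning down their explicit numerical values.
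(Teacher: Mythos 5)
Your identification of Subroutine~1 as conjugate gradient applied to the SPD system $A^{\mathcal{B}}_{i,t} v = b^{\mathcal{B}}_{i,t}$ is exactly right, and your two ingredients (finite termination in at most $n$ steps, Chebyshev energy-norm rate) would establish the lemma --- but this is a genuinely different, and stronger, route than the paper's. The paper's own proof uses the Krylov-optimality characterization from Shewchuk together with Cayley--Hamilton to obtain $\|e^k_{i,t}\|^2_{A^{\mathcal{B}}_{i,t}}=0$ for $k\ge n$, and for $1\le k<n$ it establishes only the non-expansion $\|e^k_{i,t}\|^2_{A^{\mathcal{B}}_{i,t}}\le\|e^0_{i,t}\|^2_{A^{\mathcal{B}}_{i,t}}$ (by taking $\psi_1=\cdots=\psi_k=0$); the geometric form $a\gamma^k$ is then satisfied essentially vacuously by choosing $\gamma=a^{-1/n}$, so that $a\gamma^k\ge 1$ throughout $1\le k<n$. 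In other words, the paper's $\Gamma_k$ carries no real decay before step $n$ and needs nothing beyond positive definiteness of $A^{\mathcal{B}}_{i,t}$ (i.e., $\mu$-strong convexity), whereas your Chebyshev bound gives genuine per-iteration contraction with explicit constants $a=4$ and $\gamma=\bigl((\sqrt{\kappa}-1)/(\sqrt{\kappa}+1)\bigr)^2$ --- a bound the paper itself only invokes later, in the deterministic setting of Theorem~2.

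The one genuine gap in your argument is the claim that Assumptions~1--2 deliver a task-uniform, almost-sure bound $A^{\mathcal{B}}_{i,t}\preceq l_g I$. Assumption~2-(i) imposes $l_g$-Lipschitzness on the gradient of the \emph{expected} function $g$, and Assumption~2-(ii) only bounds the variance of $\nabla^2_{\phi}g_i$; neither yields an almost-sure spectral upper bound on the realized per-task Hessian, so the condition number of $A^{\mathcal{B}}_{i,t}$, and hence your $\gamma$, need not be uniform over realizations in the stochastic setting (per-task Lipschitzness is assumed only in the deterministic Assumption~3). This is repairable: either invoke a per-task smoothness condition as in Assumption~3, or, for $1\le k<n$, fall back on the CG monotonicity of the $A$-norm error, which requires only $A^{\mathcal{B}}_{i,t}\succeq\mu I$ and recovers the paper's weaker but sufficient choice of $(a,\gamma)$.
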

\begin{proof}
According to Section 9.1 in~\citet{shewchuk1994introduction}, we have the following relation:
\begin{flalign}
\|{e}_{i,t}^k\|^2_{A^{\mathcal{B}}_{i,t}}=\min_{\psi^k}\left\{\|{e}_{i,t}^{\psi^k}\|^2_{A^{\mathcal{B}}_{i,t}}\right\},
\end{flalign}
where $e_{i,t}^{k}\triangleq v_{i,t}^{k}-v_{i,t}$, $\psi^k$ and ${e}_{i,t}^{\psi^k}$ are given by $\psi^{k}\triangleq\{\psi_1,\cdots,\psi_k\}$ with $\psi_j\in\mathbb{C},~\forall j\in[1,k]$ and ${e}_{i,t}^{\psi^k}\triangleq\left(I+\sum_{j=1}^k\psi_j(A^{\mathcal{B}}_{i,t})^{2j}\right){e}_{i,t}^0$, respectively.

By using the Cayley-Hamilton theorem, we have that when $k\geq n$, there must exist a set $\{\psi_1,\cdots,\psi_k\}$ such that $I+\sum_{j=1}^k\psi_j(A^{\mathcal{B}}_{i,t})^{2j}=0.$ Furthermore, since $A^{\mathcal{B}}_{i,t}$ is a nonsingular matrix, we have
\begin{equation}
\|{e}_{i,t}^k\|^2_{A^{\mathcal{B}}_{i,t}}=0.
\end{equation}

In contrast, when $k<n$, we have the following relation:
\begin{equation}
\begin{aligned}
&\|{e}_{i,t}^k\|^2_{A^{\mathcal{B}}_{i,t}}\leq
\left\|\sum_{j=1}^k\psi_j\left((A^{\mathcal{B}}_{i,t})^{2j}-(A_{t})^{2j}\right){e}_{i,t}^0\right\|
_{(A^{\mathcal{B}}_{i,t})^{2j}}
\\
&\times(1+\frac{1}{2\eta})\!+\!(1+\frac{\eta}{2})\left\|\left(I+\sum_{j=1}^k\psi_j(A_{t})^{2j}\right){e}_{i,t}^0\right\|_{(A^{\mathcal{B}}_{i,t})^2},
\end{aligned}
\end{equation}
for any set $\psi^k=\{\psi_1,\cdots,\psi_k\}$ and any constant $\eta>0$.

Therefore, by choosing $\psi_1=\psi_2=\cdots=\psi_k=0$, the following inequality always holds almost surely:
$\|{e}_{i,t}^k\|^2_{A^{\mathcal{B}}_{i,t}}\leq\|{e}_{i,t}^0\|^2_{A^{\mathcal{B}}_{i,t}},$
which implies~\eqref{ahg} when the positive constant $a$ is greater than $1$ and $\gamma=\frac{1}{(a)^{1/n}}$.
\end{proof}

Lemma~\ref{lemma3} establishes an upper bound on $\mathbb{E}[\|\nabla\widehat{F}_{\mathcal{B}}(\theta_{t})-\nabla F(\theta_{t})\|^2]$.
\begin{lemma}\label{lemma3}
Under Assumptions~1 and~2, the estimated hypergradient generated by Algorithm~1 satisfies
\begin{equation}\label{jskag}
\begin{aligned}
\mathbb{E}[\|\nabla\widehat{F}_{\mathcal{B}}(\theta_{t})\!-\!\nabla F(\theta_{t})\|^2]\!\leq\!2\frac{\bar{\sigma}^2}{|\mathcal{B}|}\!+\!\frac{2}{|\mathcal{B}|}\sum_{i=1}^{|\mathcal{B}|}\!\mathbb{E}\!\left[\left\|A_{i,t}^{\mathcal{B}} e_{i,t}^N   \right\|^2\right]
\end{aligned}
\end{equation}
with $\bar{\sigma}^2:=2\sigma^2_{f1}+4\sigma^2_{g2}\frac{l^2_{f,0}}{\mu^2}+8l^2_{g,1}\left(\frac{\sigma^2_{f1}}{\mu^2}+\frac{\sigma^2_{g1}l^2_{f,0}}{\mu^4}\right).$
\end{lemma}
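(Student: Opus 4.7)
The bound separates the estimation error into a statistical piece scaling as $\bar{\sigma}^2/|\mathcal{B}|$ and an optimization piece scaling as $\|A_{i,t}^{\mathcal{B}} e_{i,t}^N\|^2/|\mathcal{B}|$. The plan is to insert an intermediate hypergradient surrogate that replaces the conjugate-gradient iterate $v_{i,t}^N$ by its exact counterpart $v_{i,t}^*$ while keeping the inner-loop iterate $\phi_{i,t}^K$, and then apply $\|a+b\|^2 \leq 2\|a\|^2 + 2\|b\|^2$ to cleanly separate the two error sources. Concretely, define $\nabla F_{\mathcal{B}}^{\flat}(\theta_t) := \frac{1}{|\mathcal{B}|}\sum_{i=1}^{|\mathcal{B}|}\bigl[\nabla_\theta f_i(\theta_t, \phi_{i,t}^K) - \nabla_{\theta\phi}^2 g_i(\theta_t, \phi_{i,t}^K)\, v_{i,t}^*\bigr]$ and write $\nabla\widehat{F}_{\mathcal{B}} - \nabla F = (\nabla\widehat{F}_{\mathcal{B}} - \nabla F_{\mathcal{B}}^{\flat}) + (\nabla F_{\mathcal{B}}^{\flat} - \nabla F)$.

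For the first bracket, observe that it equals $-\frac{1}{|\mathcal{B}|}\sum_i \nabla_{\theta\phi}^2 g_i(\theta_t, \phi_{i,t}^K)(v_{i,t}^N - v_{i,t}^*)$. Applying Jensen's inequality, the uniform bound $\|\nabla_{\theta\phi}^2 g_i\| \leq l_g$ inherited from the $l_g$-Lipschitzness of $\nabla g_i$, and the strong-convexity identity $\|y\|^2 \leq \mu^{-2}\|A_{i,t}^{\mathcal{B}} y\|^2$, this piece is dominated by a constant multiple of $\frac{1}{|\mathcal{B}|}\sum_i \|A_{i,t}^{\mathcal{B}} e_{i,t}^N\|^2$, matching the second term of the stated bound.

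For the second bracket I would take expectation over the i.i.d.\ batch $\mathcal{B}$: each per-task summand is an unbiased estimator of its expectation, so the expected squared norm reduces to the single-task variance divided by $|\mathcal{B}|$. I would bound that variance by $\bar{\sigma}^2/2$ by decomposing the per-task hypergradient into three fluctuation pieces via telescoping. Piece (i) is the fluctuation of $\nabla_\theta f_i$, contributing $\sigma_{f1}^2$. Piece (ii) is $(\nabla_{\theta\phi}^2 g_i - \mathbb{E}[\nabla_{\theta\phi}^2 g_i])(A_i)^{-1}\nabla_\phi f_i$, whose vector factor has norm at most $l_{f,0}/\mu$, contributing $\sigma_{g2}^2 l_{f,0}^2/\mu^2$. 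Piece (iii) is $\mathbb{E}[\nabla_{\theta\phi}^2 g_i]$ times the centered vector $(A_i)^{-1}\nabla_\phi f_i - \mathbb{E}[(A_i)^{-1}\nabla_\phi f_i]$, which I would further split via the resolvent identity $(A_i)^{-1} - \bar{A}^{-1} = (A_i)^{-1}(\bar{A} - A_i)\bar{A}^{-1}$ into fluctuations of $\nabla_\phi f_i$ and of $\nabla_\phi^2 g_i$, contributing the $l_{g,1}^2(\sigma_{f1}^2/\mu^2 + \sigma_{g1}^2 l_{f,0}^2/\mu^4)$ piece. Summing the three and doubling (to absorb the factor of $2$ from Young's inequality) yields the claimed $\bar{\sigma}^2/|\mathcal{B}|$.

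The main obstacle is piece (iii): bounding the variance of $(A_i)^{-1}\nabla_\phi f_i$ when both $A_i$ and $\nabla_\phi f_i$ are stochastic and correlated across $i$. The resolvent telescoping has to be executed with matching constants so that each cross-term pairs a single variance constant ($\sigma_{f1}^2$ or $\sigma_{g1}^2$) with the appropriate strong-convexity factor ($1/\mu$ or $1/\mu^2$) and the Lipschitz bound $l_{g,1}$ on $\nabla_{\theta\phi}^2 g$, without loose pairwise Cauchy--Schwarz steps inflating the constants beyond the stated $\bar{\sigma}^2$. A secondary subtlety is that $\nabla F_{\mathcal{B}}^{\flat}$ is evaluated at $\phi_{i,t}^K$ whereas $\nabla F$ uses $\phi_{i,t}^*$; resolving this cleanly requires that the variance bounds in Assumption~2(ii) hold uniformly in $\phi$, so the per-task variance at $\phi_{i,t}^K$ is insensitive to that gap and remains compatible with the stated form of $\bar{\sigma}^2$.
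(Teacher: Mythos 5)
Your decomposition is exactly the one the paper uses: insert the surrogate hypergradient that keeps $\phi_{i,t}^{K}$ but replaces $v_{i,t}^{N}$ by the exact $v_{i,t}^{*}$, apply Young's inequality, bound the surrogate-versus-true piece by a per-task variance divided by $|\mathcal{B}|$, and reduce the remaining piece to $\frac{1}{|\mathcal{B}|}\sum_i\mathbb{E}[\|A_{i,t}^{\mathcal{B}}e_{i,t}^{N}\|^2]$ via Jensen. In fact you supply more detail than the paper does on the statistical piece: the paper simply asserts $\mathbb{E}[\|\nabla F_{\mathcal{B}}-\nabla F\|^2]\le\bar{\sigma}^2/|\mathcal{B}|$ without deriving the three contributions to $\bar{\sigma}^2$, whereas your telescoping through $\nabla_\theta f_i$, the Jacobian fluctuation, and the resolvent identity for $(A_i)^{-1}$ is precisely the calculation needed to justify that constant (and the bias from evaluating at $\phi_{i,t}^{K}$ rather than $\phi_{i,t}^{*}$, which you flag, is glossed over by the paper as well). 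The one place you diverge is the optimization piece: your chain $\|\nabla_{\theta\phi}^2 g_i\,(v^N-v^*)\|\le l_g\|v^N-v^*\|\le (l_g/\mu)\|A_{i,t}^{\mathcal{B}}(v^N-v^*)\|$ yields the coefficient $2l_g^2/\mu^2=2\kappa^2$ rather than the stated $2$; the paper obtains $2$ by directly replacing the Jacobian with $A_{i,t}^{\mathcal{B}}$ inside the norm, a step that is not justified in general, so your looser but defensible constant (which merely propagates a $\kappa^2$ factor downstream) is arguably the more honest version of the bound.
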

\begin{proof}
Define
\begin{equation}
\begin{aligned}
&\nabla{F}_{\mathcal{B}}(\theta_{t})= \nabla_{\theta}f_{\mathcal{B}}(\theta_{t}, \boldsymbol{\phi}_{t}^{K})-
\frac{1}{|\mathcal{B}|}\sum_{i=1}^{|\mathcal{B}|}\left(
\left[ \nabla_{\theta\phi}^2g_{i}(\theta_{t}, \phi_{i,t}^{K}) \right]\right.\\
&\left.\quad \times \left[\nabla_{\phi}^{2}g_{i}(\theta_{t}, \phi_{i,t}^{K})\right]^{-1}\nabla_{\phi}f_{i}(\theta_{t}, \phi_{i,t}^{K})\right).
\end{aligned}
\end{equation}
Recalling the definition of $\nabla\widehat{F}_{\mathcal{B}}(\theta_{t})$ in~(8) and the dynamics $v_{i,t}$ generated by Subroutine~1, we have
\begin{equation}
\begin{aligned}
& \mathbb{E}\left[\left\|\nabla\widehat{F}_{\mathcal{B}}(\theta_{t})-\nabla F(\theta_{t})\right\|^2\right] \\
&\leq 2\mathbb{E}\left[\left\|\nabla{F}_{\mathcal{B}}(\theta_{t})-\nabla F(\theta_{t})\right\|^2\right] \\
&\quad +\frac{3}{2}\mathbb{E}\left[\left\|\nabla\widehat{F}_{\mathcal{B}}(\theta_{t})-\nabla F_{\mathcal{B}}(\theta_{t})\right\|^2\right] \\
&\leq 2\frac{\bar{\sigma}^2}{|\mathcal{B}|} +2\mathbb{E}\left[\left\|\frac{1}{|\mathcal{B}|}\sum_{i=1}^{|\mathcal{B}|} \nabla_{\theta\phi}^2g_{i}(\theta_{t}, \phi_{t}^{K}) (v^N_{i,t}-v^*_{i,t}) \right\|^2\right] \\
&\leq 2\frac{\bar{\sigma}^2}{|\mathcal{B}|}+2\mathbb{E}\left[\left\|\frac{1}{|\mathcal{B}|}\sum_{i=1}^{|\mathcal{B}|}A_{i,t}^{\mathcal{B}} (v^N_{i,t}-v^*_{i,t}) \right\|^2\right] \\
&\leq 2\frac{\bar{\sigma}^2}{|\mathcal{B}|}+2\mathbb{E}\left[\left\|\frac{1}{|\mathcal{B}|}\sum_{i=1}^{|\mathcal{B}|}A_{i,t}^{\mathcal{B}} e_{i,t}^N \right\|^2\right], \label{adsa0}
\end{aligned}
\end{equation}
where in the derivation of the second inequality, we have used $\mathbb{E}\left[\left\|\nabla{F}_{\mathcal{B}}(\theta_{t})-\nabla F(\theta_{t})\right\|^2\right] \leq\bar{\sigma}^2$.

Consider the second term on the right side of the last equality in~\eqref{adsa0}, we have
\begin{equation}
\begin{aligned}
\mathbb{E}\left[\left\|\frac{1}{|\mathcal{B}|}\sum_{i=1}^{|\mathcal{B}|}A_{i,t}^{\mathcal{B}} e_{i,t}^N  \right\|^2\right]\leq \frac{1}{|\mathcal{B}|}\sum_{i=1}^{|\mathcal{B}|}\mathbb{E}\left[\left\|A_{i,t}^{\mathcal{B}} e_{i,t}^N   \right\|^2\right].
\label{dfw3rq10}
\end{aligned}
\end{equation}
The inequality~\eqref{dfw3rq10} holds by applying the following inequality:
\begin{equation}
\begin{aligned}
\left(\sum_{i=1}^na_i\right)^2\leq n\left(a_i^2\right),
\label{dfw3rq11}
\end{aligned}
\end{equation}
which holds for any $a_1,\cdots,a_n\in\mathbb{R}$.

Substituting \eqref{dfw3rq10} into \eqref{adsa0}, we have \eqref{jskag} holds.
\end{proof}

Lemma~\ref{lemma4} establishes convergence analysis for $\|\phi^{0}_{i,t}-\phi_{i,t}^*\|$.

\begin{lemma}\label{lemma4}
Denote that $\phi_{i,t}^*$ is the optimal solution for the function $g_i(\theta_t,\phi)$ \textit{w.r.t.} $\phi$. Then, $\mathbb{E}_{\mathcal{B}}\left[\left\|\phi^{0}_{i,t}-\phi_{i,t}^*\right\|^2\right]$ is upper bounded as follows:
\begin{flalign}
&\mathbb{E}\left[\|\phi^{0}_{i,t+1}-\phi_{i,t+1}^*\|^2\right]\leq 2(1-{\mu}\lambda_{\phi})^K \mathbb{E}\left[\|\phi^{0}_{i,t}-\phi_{i,t}^*\|^2 \right] \nonumber\\
&\quad +\!4 \frac{l^2_f \lambda^2_{\theta}}{\mu^2}\mathbb{E}\!\left[\!\|\nabla\widehat{F}_{\mathcal{B}}(\theta_{t})\|^2\!\right]\!+\!\frac{22\sigma^2_{g1}}{\mu^2}.\label{egse}
\end{flalign}
\end{lemma}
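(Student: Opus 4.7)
The plan is to decompose $\|\phi_{i,t+1}^{0} - \phi_{i,t+1}^{*}\|^2$ into the inner-loop error for fixed $\theta_t$ plus the drift of the implicit optimum $\phi_i^*(\cdot)$ when $\theta$ moves from $\theta_t$ to $\theta_{t+1}$. Since Algorithm~1 initializes $\phi_{i,t+1}^{0} = \phi_{i,t}^{K}$, I would first apply Young's inequality with constant $2$,
\begin{equation*}
\|\phi_{i,t+1}^{0} - \phi_{i,t+1}^{*}\|^2 \le 2\|\phi_{i,t}^{K} - \phi_{i,t}^{*}\|^2 + 2\|\phi_{i,t}^{*} - \phi_{i,t+1}^{*}\|^2,
\end{equation*}
which isolates the two mechanisms cleanly and already produces the factors $2$ and $4$ that appear in the target bound after the subsequent estimates are substituted.

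For the first term, $g_i(\theta_t,\cdot)$ is $\mu$-strongly convex (Assumption~1) with $l_g$-Lipschitz gradient (Assumption~2), and the inner recursion $\phi_{i,t}^{k+1} = \phi_{i,t}^{k} - \lambda_\phi \nabla_\phi g_i(\theta_t,\phi_{i,t}^{k})$ is a stochastic gradient step whose noise variance is bounded by $\sigma_{g1}^2$. A standard one-step SGD contraction, using strong convexity and co-coercivity for sufficiently small $\lambda_\phi$, gives
\begin{equation*}
\mathbb{E}\|\phi_{i,t}^{k+1} - \phi_{i,t}^{*}\|^2 \le (1-\mu\lambda_\phi)\,\mathbb{E}\|\phi_{i,t}^{k} - \phi_{i,t}^{*}\|^2 + \lambda_\phi^2\sigma_{g1}^2.
\end{equation*}
Unrolling this $K$ times and summing the geometric noise tail yields
\begin{equation*}
\mathbb{E}\|\phi_{i,t}^{K} - \phi_{i,t}^{*}\|^2 \le (1-\mu\lambda_\phi)^{K}\,\mathbb{E}\|\phi_{i,t}^{0} - \phi_{i,t}^{*}\|^2 + \frac{\lambda_\phi\sigma_{g1}^2}{\mu},
\end{equation*}
and a mild step-size restriction absorbs $2\lambda_\phi/\mu$ into $22/\mu^2$, producing the constant in the lemma.

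For the second term, the implicit function $\phi_i^{*}(\cdot)$ is Lipschitz by the implicit function theorem applied to $\nabla_\phi g_i(\theta,\phi_i^{*}(\theta))=0$, which gives $\nabla_\theta\phi_i^{*}(\theta) = -\bigl(\nabla_\phi^2 g_i\bigr)^{-1}\nabla_{\theta\phi}^2 g_i$ and hence $\|\phi_i^{*}(\theta)-\phi_i^{*}(\theta')\| \le (l_f/\mu)\|\theta-\theta'\|$ under Assumptions~1--2. Combining this with the outer update $\theta_{t+1}-\theta_t = -\lambda_\theta\nabla\widehat{F}_{\mathcal{B}}(\theta_t)$ in Algorithm~1 produces
\begin{equation*}
\|\phi_{i,t}^{*}-\phi_{i,t+1}^{*}\|^2 \le \frac{l_f^2\lambda_\theta^2}{\mu^2}\|\nabla\widehat{F}_{\mathcal{B}}(\theta_t)\|^2,
\end{equation*}
and the Young factor of $2$ doubles this to the stated $4l_f^2\lambda_\theta^2/\mu^2$ coefficient.

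The main obstacle I anticipate is the careful bookkeeping of the stochastic inner-loop recursion: one must condition on the filtration generated by $(\theta_t,\phi_{i,t}^{0})$, verify that $\nabla_\phi g_i(\theta_t,\phi_{i,t}^{k}) - \mathbb{E}[\nabla_\phi g_i(\theta_t,\phi_{i,t}^{k})\mid\mathcal{F}_{i,t}^{k}]$ is orthogonal to $\phi_{i,t}^{k}-\phi_{i,t}^{*}$ so cross terms vanish, and then tighten the step-size restriction enough that the geometric noise series closes into the $22\sigma_{g1}^2/\mu^2$ residual. Once this bookkeeping and the implicit-Lipschitz computation are in place, adding the two pieces after the Young split immediately gives the claimed recursion.
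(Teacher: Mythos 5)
Your overall skeleton matches the paper's: the Young split $\|\phi_{i,t+1}^{0}-\phi_{i,t+1}^{*}\|^2\le 2\|\phi_{i,t}^{K}-\phi_{i,t}^{*}\|^2+2\|\phi_{i,t+1}^{*}-\phi_{i,t}^{*}\|^2$, the $(1-\mu\lambda_\phi)^K$ contraction for the inner loop with a geometric noise tail, and a drift bound for the lower-level optimum driven by $\theta_{t+1}-\theta_t=-\lambda_\theta\nabla\widehat F_{\mathcal B}(\theta_t)$. The first term is handled essentially as in the paper (the paper bounds $\mathbb{E}\|\nabla_\phi g_i(\theta_t,\phi^k_{i,t})\|^2\le 3l_g^2\mathbb{E}\|\phi^k_{i,t}-\phi^*_{i,t}\|^2+6\sigma_{g1}^2$ by comparing to the population gradient, rather than via a martingale-difference argument, but the outcome is the same contraction plus an $\mathcal{O}(\lambda_\phi\sigma_{g1}^2/\mu)$ tail).

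The gap is in your treatment of the drift term. You bound $\|\phi_{i,t}^{*}-\phi_{i,t+1}^{*}\|$ by the deterministic implicit-function Lipschitz estimate for a \emph{fixed} function $g_i$, obtaining $\frac{l_f^2\lambda_\theta^2}{\mu^2}\|\nabla\widehat F_{\mathcal B}(\theta_t)\|^2$ with no noise contribution. But in Algorithm~1 a fresh task batch is sampled at every outer iteration (Step~3), so $\phi_{i,t+1}^{*}$ is the minimizer of a \emph{newly sampled} $g_i(\theta_{t+1},\cdot)$, not of the same function evaluated at a new $\theta$; the two optima can differ even when $\theta_{t+1}=\theta_t$, and your Lipschitz bound fails in that case. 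The paper instead works from the optimality conditions $\nabla_\phi g_i(\theta_{t+1},\phi^*_{i,t+1})=\nabla_\phi g_i(\theta_t,\phi^*_{i,t})=0$, inserts the population gradient $\nabla_\phi g$, and uses Assumption~2-(ii) with Cauchy--Schwarz to get $\mathbb{E}\|\phi^*_{i,t+1}-\phi^*_{i,t}\|^2\le \frac{2l_f^2\lambda_\theta^2}{\mu^2}\mathbb{E}\|\nabla\widehat F_{\mathcal B}(\theta_t)\|^2+\frac{8\sigma_{g1}^2}{\mu^2}$. After the Young factor of $2$, this is where both the coefficient $4$ (your accounting gives $2\times 1=2$, not $4$; the extra factor comes from absorbing the $\sigma_{g1}$ cross term) and the dominant $16\sigma_{g1}^2/\mu^2$ portion of the additive constant $22\sigma_{g1}^2/\mu^2$ originate. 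Your claim that the whole $22\sigma_{g1}^2/\mu^2$ can be absorbed from the inner-loop noise tail by a step-size restriction therefore misattributes the source of that constant and does not survive the resampling of tasks across outer iterations.
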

\begin{proof}
Since $\phi^{0}_{i,t+1}=\phi^{K}_{i,t},$ it follows from \eqref{dfw3rq11} that
\begin{equation}\label{adawr}
\begin{aligned}
\mathbb{E}\left[\|\phi^{0}_{i,t+1}-\phi_{i,t+1}^*\|^2\right]&\leq 2\mathbb{E}\left[\|\phi^{K}_{i,t}-\phi_{i,t}^*\|^2 \right]\\
&\quad+2 \mathbb{E}\left[\|\phi_{i,t+1}^*-\phi_{i,t}^*\|^2\right].
\end{aligned}
\end{equation}

(i) We first estimate an upper bound on
$\mathbb{E}\left[\|\phi^{K}_{i,t}-\phi_{i,t}^*\|^2 \right]$.

Since $\phi_{i,t}^*$ is the optimal solution for the function $g(\theta_t,\phi)$ \textit{w.r.t.} $\phi$, we have
\begin{equation}
\begin{aligned}
\nabla_{\phi}g_i(\theta_t,\phi_{i,t}^*)=0.\label{eq_20}
\end{aligned}
\end{equation}

Based on Step 8 in Algorithm 1, we have the dynamics of $\phi^k_{i,t}$ satisfying
\begin{equation}\label{hukeq}
\begin{aligned}
&\mathbb{E}\left[\|\phi^{k+1}_{i,t}-\phi_{i,t}^*\|^2\right]\\
&= \mathbb{E}\left[\left\|[\phi^{k}_{i,t}-\lambda_{\phi}\nabla_{\phi}g_i(\theta_t,\phi^k_{i,t})-\phi_{i,t}^*]\right\|^2\right]\\
&\leq\mathbb{E}\left[\left\|(\phi^{k}_{i,t}-\phi_{i,t}^*)\right\|^2\right]+\lambda^2_{\phi}\mathbb{E}\left[\|\nabla_{\phi} g_i(\theta_t,\phi^k_{i,t})\|^2\right]\\
&\quad-2\mathbb{E}\left[\lambda_{\phi}\left\langle \nabla_{\phi} g_i(\theta_t,\phi^k_{i,t}),\phi^{k}_{i,t}-\phi_{i,t}^*\right\rangle\right].
\end{aligned}
\end{equation}

Assumption~1 implies
\begin{equation}\label{khde}
\begin{aligned}
\left\langle \nabla g_i(\theta_t,\phi^k_{i,t}),\phi^{k}_{i,t}-\phi_{i,t}^*\right\rangle\geq\mu\|\phi^{k}_{i,t}-\phi_{i,t}^*\|^2.
\end{aligned}
\end{equation}

Moreover, by using Assumption~2, we have
\begin{equation}
\begin{aligned}\label{hge}
&\mathbb{E}\left[\|\nabla_{\phi}g_i(\theta_t,\phi^k_{i,t})\|^2\right]\\
&=\mathbb{E}\left[\|\nabla_{\phi}g_i(\theta_t,\phi^k_{i,t})-\nabla_{\phi}g_i(\theta_t,\phi_{i,t}^*)\|^2\right]\\
&\leq3\mathbb{E}\left[\|\nabla_{\phi}g(\theta_t,\phi^k_{i,t})-\nabla_{\phi}g(\theta_t,\phi_{i,t}^*)\|^2\right]+6\sigma^2_{g1}\\
&\leq 3l_g^2\mathbb{E}\left[\|\phi^k_{i,t}-\phi_{i,t}^*\|^2\right]+6\sigma^2_{g1}.
\end{aligned}
\end{equation}

Substituting~\eqref{khde} and~\eqref{hge} into~\eqref{hukeq} yields
\begin{equation}
\begin{aligned}
&~~~~~~\mathbb{E}\left[\|\phi^{k+1}_{i,t}-\phi_{i,t}^*\|^2\right]\\
&\leq (1-2\mu\lambda_{\phi}+3\lambda_{\phi}^2l^2_{g})\mathbb{E}\left[\|\phi^{k}_{i,t}-\phi_{i,t}^*\|^2\right]+6\lambda^2_{\phi}\sigma^2_{g1}.
\end{aligned}
\end{equation}

By choosing the stepsize satisfying $\lambda_{\phi}\leq\frac{3\mu}{l^2_g}$, we have
\begin{equation}
\begin{aligned}
\mathbb{E}\left[\|\phi^{k+1}_{i,t}-\phi_{i, t}^{*}\|^2\right]&\leq(1-\mu\lambda_{\phi})\mathbb{E}\left[\|\phi^{k}_{i,t}-\phi_{i,t}^*\|^2 \right]\\
&\quad+6\lambda^2_{\phi}\sigma^2_{g1},
\end{aligned}
\end{equation}
which implies
\begin{equation}
\begin{aligned}
\mathbb{E}\left[\|\phi^{K}_{i,t}-\phi_{i,t}^*\|^2 \right]\leq&(1-{\mu\lambda_{\phi}})^K\mathbb{E}\left[\|\phi^{0}_{i,t}-\phi_{i,t}^*\|^2 \right]\\
&+\frac{6\lambda^2_{\phi}\sigma^2_{g1}}{\mu}.
\end{aligned}\label{huye}
\end{equation}
(ii) Then, we estimate an upper bound on $\mathbb{E}[\|\phi^*_{i,t+1}-\phi^*_{i,t}\|^2]$.

By using the following relation:
\begin{equation}
\begin{aligned}
\nabla_{\phi}g_i(\theta_{t+1},\phi^*_{i,t+1})=\nabla_{\phi}g_i(\theta_{t},\phi^*_{i,t})=0,
\end{aligned}
\end{equation}
we have
\begin{equation}
\begin{aligned}\label{khedaa}
&\nabla_{\phi}g_i(\theta_{t+1},\phi^*_{i,t+1})-\nabla_{\phi}g_i(\theta_{t+1},\phi^*_{i,t})\\&=\nabla_{\phi}g_i(\theta_{t},\phi^*_{i,t})
-\nabla_{\phi}g_i(\theta_{t+1},\phi^*_{i,t}).
\end{aligned}
\end{equation}

Combining the right-hand side of \eqref{khedaa} with
$\phi^*_{i,t+1}-\phi^*_{i,t}$, we obtain
\begin{equation}
\begin{aligned}
&\mathbb{E}\left[\left\langle\nabla_{\phi}g_i(\theta_{t+1},\phi^*_{i,t})\right.-\left.\nabla_{\phi}g_i(\theta_{t},\phi^*_{i,t}),\phi^*_{i,t+1}-\phi^*_{i,t} \right\rangle\right]\\
&=\mathbb{E}\left[\left\langle\nabla_{\phi}g_i(\theta_{t+1},\phi^*_{i,t})\right.\!-\!\left.\nabla_{\phi}g(\theta_{t+1},\phi^*_{i,t}),\phi^*_{i,t+1}-\phi^*_{i,t} \right\rangle\right]\\
&\quad+\mathbb{E}\left[\left\langle\nabla_{\phi}g(\theta_{t+1},\phi^*_{i,t})\right.-\left.\nabla_{\phi}g(\theta_{t},\phi^*_{i,t}),\phi^*_{i,t+1}-\phi^*_{i,t} \right\rangle\right]\\
&\quad+\mathbb{E}\left[\left\langle\nabla_{\phi}g(\theta_{t},\phi^*_{i,t})\right.-\left.\nabla_{\phi}g_i(\theta_{t},\phi^*_{i,t}),\phi^*_{i,t+1}-\phi^*_{i,t} \right\rangle\right].
\end{aligned}\label{bhgde}
\end{equation}

By using Assumption~2 and the Cauchy-Schwarz Inequality, we have
\begin{equation}
\begin{aligned}
&\mathbb{E}\left[\left\langle\nabla_{\phi}g(\theta_{t},\phi^*_{i,t})\right.-\left.\nabla_{\phi}g_i(\theta_{t},\phi^*_{i,t}),\phi^*_{i,t+1}-\phi^*_{i,t} \right\rangle\right]\\
&\leq \sqrt{\mathbb{E}\left[\left\|\nabla_{\phi}g(\theta_{t},\phi^*_{i,t})-\nabla_{\phi}g_i(\theta_{t},\phi^*_{i,t})\right\|^2\right]}\\
&~~~~\times\sqrt{\mathbb{E}\left[\left\|\phi^*_{i,t+1}-\phi^*_{i,t}\right\|^2\right]}\\
&\leq\sigma_{g1}\sqrt{\mathbb{E}\left[\left\|\phi^*_{i,t+1}-\phi^*_{i,t}\right\|^2\right]}.
\end{aligned}\label{arsw}
\end{equation}

Since $\nabla_{\phi}g$ is $l_g$-Lipschitz continuous \textit{w.r.t.} $\phi$, we have
\begin{equation}
\begin{aligned}
&\left\langle\nabla_{\phi}g(\theta_{t+1},\phi^*_{i,t})\right.-\left.\nabla_{\phi}g(\theta_{t},\phi^*_{i,t}),\phi^*_{i,t+1}-\phi^*_{i,t} \right\rangle\\
&\qquad\leq l_g\|\theta_{t+1}-\theta_{t}\|\|\phi^*_{i,t+1}-\phi^*_{i,t}\|.\label{ojhe}
\end{aligned}
\end{equation}

Since $\nabla_{\phi}g$ is $\mu$-strongly convex \textit{w.r.t.} $\phi$, we have
\begin{equation}
\begin{aligned}
&\left\langle\nabla_{\phi}g(\theta_{t+1},\phi^*_{i,t+1})\right.-\left.\nabla_{\phi}g(\theta_{t+1},\phi^*_{i,t}),\phi^*_{i,t+1}-\phi^*_{i,t} \right\rangle\\
&\quad\geq \mu\|\phi^*_{i,t+1}-\phi^*_{i,t}\|^2,
\end{aligned}\label{sawq}
\end{equation}
which implies
\begin{equation}
\begin{aligned}
&\mathbb{E}\left[\left\langle\nabla_{\phi}g(\theta_{t+1},\phi^*_{i,t+1})\right.-\left.\nabla_{\phi}g(\theta_{t+1},\phi^*_{i,t}),\phi^*_{i,t+1}-\phi^*_{i,t} \right\rangle\right]\\
&\geq \mu\mathbb{E}[\|\phi^*_{i,t+1}-\phi^*_{i,t}\|^2]-2\sigma_{g1}\sqrt{\mathbb{E}\left[\left\|\phi^*_{i,t+1}-\phi^*_{i,t}\right\|^2\right].
}
\end{aligned}\label{hgwq}
\end{equation}

Substituting~\eqref{arsw} and~\eqref{ojhe} into~\eqref{bhgde} and combining the resulting~\eqref{bhgde} with~\eqref{hgwq}, we obtain
\begin{equation}
\begin{aligned}
&\mu\mathbb{E}\left[\|\phi^*_{i,t+1}-\phi^*_{i,t}\|^2\right]\\
&\leq l_f\sqrt{\mathbb{E}\left[\left\|\phi^*_{i,t+1}-\phi^*_{i,t})\right\|^2\right]}\sqrt{\mathbb{E}\left[\left\|\theta_{t+1}-\theta_t\right\|^2\right]}\\
&~~~~+2\sigma_{g1}\sqrt{\mathbb{E}\left[\left\|\phi^*_{i,t+1}-\phi^*_{i,t}\right\|^2\right]},
\end{aligned}
\end{equation}
which implies
\begin{equation}\label{dawfa}
\begin{aligned}
\mathbb{E}\left[\|\phi^*_{i,t+1}-\phi^*_{i,t}\|^2\right]&\leq \frac{2l_f^2}{\mu^2_g}\mathbb{E}\left[\|\theta_{t+1}-\theta_t\|^2\right]+\frac{8\sigma^2_{g1}}{\mu^2}\\
&\leq \frac{2l_f^2\lambda^2_{\theta}}{\mu}\mathbb{E}\left[\|\nabla\widehat{F}_{\mathcal{B}}(\theta_{t})\|^2\right]+\frac{8\sigma^2_{g1}}{\mu^2}.
\end{aligned}
\end{equation}

Substituting \eqref{huye} and \eqref{dawfa} into \eqref{adawr}, we arrive at~\eqref{egse}.
\end{proof}


Lemma~\ref{lemm5} establishes convergence analysis for variable $\zeta^k_{i,t}$.

\begin{lemma}\label{lemm5}
We denote $\zeta^k_{i,t}=A^{\mathcal{B}}_{i,t}{e} ^k_{i,t}$. Then, the following inequality always holds:
\begin{equation}\label{kjhed}
\begin{aligned}
\mathbb{E}\left[\|\zeta^N_{i,t}\|^2\right] &\leq2\Gamma_N\frac{l^2_g+\sigma^2_{g1}}{\mu^2} \mathbb{E}\left[\|\zeta^N_{i,t-1}\|^2\right]+\frac{C_1}{|\mathcal{B}|} \\
&\quad\times\mathbb{E}\left[\|\theta_t-\theta_{t-1}\|^2 \!+\! \|\phi^0_{i,t}-\phi^0_{i,t\!-\!1}\|^2\right]+\frac{C_2}{|\mathcal{B}|},
\end{aligned}
\end{equation}
where $C_{1}$ and $C_{2}$ are given by
\begin{equation}\label{kfwqkd}
\begin{aligned}
C_1&= 12\left(\frac{l^2_{f,0}}{\mu^2}+\sigma^2_{fg}\right){l^2_{g}}+12l^2_f,\\
C_2&=48\left(\frac{l^2_{f,0}}{\mu^2}+\sigma^2_{fg}\right)\sigma^2_{g1}+48\sigma^2_{f1},
\end{aligned}
\end{equation}
with $\sigma^2_{fg}:=2\frac{\sigma^2_{f1}}{\mu^2}+2\frac{\sigma^2_{g1}l^2_{f,0}}{\mu^4}$.

\end{lemma}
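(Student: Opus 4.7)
The plan is to chain two bounds: first, contracting the inner conjugate-gradient iteration with Lemma~\ref{klda}, and second, expressing the warm-start residual $\|\zeta^0_{i,t}\|^2$ in terms of the previous outer-iteration residual $\|\zeta^N_{i,t-1}\|^2$ plus drift terms measuring how the linear system $(A^{\mathcal{B}}_{i,t}, b^{\mathcal{B}}_{i,t})$ changes from iteration $t-1$ to $t$.

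First I would apply Lemma~\ref{klda} directly. Since $\|x\|_{A}^{2} = \|Ax\|^{2}$, the lemma gives $\|\zeta^N_{i,t}\|^2 = \|A^{\mathcal{B}}_{i,t}e^N_{i,t}\|^2 \leq \Gamma_N\|A^{\mathcal{B}}_{i,t}e^0_{i,t}\|^2 = \Gamma_N\|\zeta^0_{i,t}\|^2$, reducing the problem to a bound on $\|\zeta^0_{i,t}\|^2$. Using the warm start $v^0_{i,t}=v^N_{i,t-1}$ and the identities $A^{\mathcal{B}}_{i,s}v^*_{i,s}=b^{\mathcal{B}}_{i,s}$, I would decompose
\begin{equation*}
\zeta^0_{i,t}
= A^{\mathcal{B}}_{i,t}\bigl(A^{\mathcal{B}}_{i,t-1}\bigr)^{-1}\zeta^N_{i,t-1}
+ A^{\mathcal{B}}_{i,t}\bigl(v^*_{i,t-1}-v^*_{i,t}\bigr),
\end{equation*}
and then apply $\|a+b\|^2\leq 2\|a\|^2+2\|b\|^2$.

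The first piece produces the contraction factor. I would use $\|(A^{\mathcal{B}}_{i,t-1})^{-1}\|\le 1/\mu$ from $\mu$-strong convexity in Assumption~1, and a bias–variance decomposition conditional on $(\theta_t,\phi^0_{i,t})$ to bound $\mathbb{E}[\|A^{\mathcal{B}}_{i,t}\|^2]\le l_g^2+\sigma^2_{g1}$, since $\mathbb{E}[A^{\mathcal{B}}_{i,t}]=\nabla^2_\phi g(\theta_t,\phi^0_{i,t})$ has operator norm at most $l_g$ and the Hessian variance is $\sigma^2_{g1}$ (Assumption~2(ii)). This yields the leading term $2\Gamma_N\frac{l_g^2+\sigma_{g1}^2}{\mu^2}\mathbb{E}[\|\zeta^N_{i,t-1}\|^2]$. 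For the second piece, I would further rewrite
\begin{equation*}
A^{\mathcal{B}}_{i,t}(v^*_{i,t-1}-v^*_{i,t})
=\bigl(A^{\mathcal{B}}_{i,t}-A^{\mathcal{B}}_{i,t-1}\bigr)v^*_{i,t-1}
+\bigl(b^{\mathcal{B}}_{i,t-1}-b^{\mathcal{B}}_{i,t}\bigr),
\end{equation*}
use $\|v^*_{i,t-1}\|\leq l_{f,0}/\mu$ (from $\|b^{\mathcal{B}}_{i,t-1}\|\le l_{f,0}$ and $\|(A^{\mathcal{B}}_{i,t-1})^{-1}\|\le 1/\mu$), and apply the Lipschitz continuity of $\nabla^2_\phi g$ ($l_{g,2}$) and of $\nabla_\phi f$ ($l_f$), combined with the task-level variances $\sigma^2_{g1}$ and $\sigma^2_{f1}$, to turn $\mathbb{E}[\|A^{\mathcal{B}}_{i,t}-A^{\mathcal{B}}_{i,t-1}\|^2]$ and $\mathbb{E}[\|b^{\mathcal{B}}_{i,t}-b^{\mathcal{B}}_{i,t-1}\|^2]$ into the terms $\|\theta_t-\theta_{t-1}\|^2+\|\phi^0_{i,t}-\phi^0_{i,t-1}\|^2$ and pure variance constants, respectively. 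Assembling these bounds with the constants $\sigma^2_{fg}=2\sigma^2_{f1}/\mu^2+2\sigma^2_{g1}l^2_{f,0}/\mu^4$ (which is exactly $\mathbb{E}\|v^*_{i,t-1}-\bar v^*_{i,t-1}\|^2$ up to constants) gives $C_1$ and $C_2$ of the stated form.

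The main obstacle is the careful bookkeeping of the stochastic/deterministic split: Assumption~2 only gives Lipschitz bounds on the population functions $f,g$, so every step that invokes Lipschitzness has to go through $\nabla^2_\phi g$ and $\nabla_\phi f$ via an unbiasedness argument, leaving residual task-noise terms that must be absorbed into the variance constants. A related subtlety is the emergence of the $1/|\mathcal{B}|$ factor: it arises from expressing $v^*_{i,t-1}$ in terms of the batch-averaged system (reflected in the definition of $\sigma^2_{fg}$) and from averaging the independent stochastic fluctuations of $A^{\mathcal{B}}_{i,\cdot}$ and $b^{\mathcal{B}}_{i,\cdot}$ across the $|\mathcal{B}|$ sampled tasks, so the right conditioning and tower-property applications need to be lined up before combining the two pieces.
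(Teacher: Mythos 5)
Your decomposition is exactly the one the paper uses: the warm start $v^0_{i,t}=v^N_{i,t-1}$ gives $\zeta^0_{i,t}=A^{\mathcal{B}}_{i,t}(A^{\mathcal{B}}_{i,t-1})^{-1}\zeta^N_{i,t-1}-A^{\mathcal{B}}_{i,t}(v^*_{i,t-1}-v^*_{i,t})$, the first piece is bounded via $\|(A^{\mathcal{B}}_{i,t-1})^{-1}\|\le 1/\mu$ and $\mathbb{E}[\|A^{\mathcal{B}}_{i,t}\|^2]\le l_g^2+\sigma_{g1}^2$, and the second piece is split into the Hessian drift acting on $(A^{\mathcal{B}}_{i,t-1})^{-1}b^{\mathcal{B}}_{i,t-1}$ plus the drift of $b^{\mathcal{B}}$, each handled by routing Lipschitzness through the population functions and absorbing the residual task noise into $\sigma^2_{f1}$, $\sigma^2_{g1}$, $\sigma^2_{fg}$. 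Up to that point your argument and the paper's coincide.

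There is, however, one genuine gap: your account of where the $1/|\mathcal{B}|$ in $C_1/|\mathcal{B}|$ and $C_2/|\mathcal{B}|$ comes from would not work. The inequality is a per-task bound for a fixed $i$; the quantities $A^{\mathcal{B}}_{i,t}$, $b^{\mathcal{B}}_{i,t}$ are single-task stochastic objects, and nothing in this lemma averages independent fluctuations over the $|\mathcal{B}|$ sampled tasks, so no variance-reduction factor of $1/|\mathcal{B}|$ can be extracted that way (nor does $\sigma^2_{fg}$ carry one -- it is a single-task variance constant). Your chain of estimates would therefore terminate at $\mathbb{E}[\|\zeta^N_{i,t}\|^2]\le \Gamma_N\big(2\tfrac{l_g^2+\sigma_{g1}^2}{\mu^2}\mathbb{E}[\|\zeta^N_{i,t-1}\|^2]+C_1\mathbb{E}[\cdots]+C_2\big)$, which is strictly weaker than the claim. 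The missing idea in the paper is that $N$ is chosen large enough that $\Gamma_N\le 1/|\mathcal{B}|$ (possible since $\Gamma_k$ vanishes for $k\ge n$), so the $\Gamma_N$ prefactor multiplying the drift and variance terms is itself what supplies the $1/|\mathcal{B}|$, while the leading contraction term keeps its explicit $\Gamma_N$. A smaller point: to reproduce the stated constant $C_1=12(\tfrac{l_{f,0}^2}{\mu^2}+\sigma^2_{fg})l_g^2+12 l_f^2$ the Hessian-drift term must be controlled with the constant $l_g$ appearing there, whereas you invoke the Hessian-Lipschitz constant $l_{g,2}$; that changes the constants, though not the structure of the bound.
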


\begin{proof}
According to Step 2 in Subroutine 1, we have $v^0_{i,t}=v^N_{i,t-1}$, which further implies
\begin{equation}
\begin{aligned}
e^0_{i,t}=v_{i,t}^0-v^*_{i,t}&=v_{i,t-1}^N-v^*_{i,t-1}-(v^*_{i,t-1}-v^*_{i,t})\\
&=e^N_{i,t-1}-(v^*_{i,t-1}-v^*_{i,t}).
\end{aligned}
\end{equation}

By using the relation $\zeta^k_{i,t}=A^{\mathcal{B}}_{i,t}{e} ^k_{i,t}$, we have
\begin{equation}
\begin{aligned}\label{khd}
\zeta^0_{i,t}&=A^{\mathcal{B}}_{i,t}e^0_{i,t}=[A^{\mathcal{B}}_{i,t}e^N_{i,t-1}-A^{\mathcal{B}}_{i,t}(v^*_{i,t-1}-v^*_{i,t})]\\
&=A^{\mathcal{B}}_{i,t}(A^{\mathcal{B}}_{i,t-1})^{-1}\zeta^N_{i,t-1}-A^{\mathcal{B}}_{i,t}(v^*_{i,t-1}-v^*_{i,t}).
\end{aligned}
\end{equation}

By using Assumption~2, we obtain
\begin{equation}
\begin{aligned}
\mathbb{E}\left[\left\|A^{\mathcal{B}}_{i,t}(A^{\mathcal{B}}_{i,t-1})^{-1}\zeta^N_{i,t-1}\right\|^2\right]\leq\frac{l^2_g+\sigma^2_{g1}}{\mu^2}\mathbb{E}\left[\left\|\zeta^N_{i,t-1}\right\|^2\right].\label{xxx}
\end{aligned}
\end{equation}

Substituting~\eqref{xxx} into \eqref{khd} yields
\begin{equation}
\begin{aligned}\label{khgd}
\mathbb{E}\left[\left\|\zeta^0_{i,t}\right\|^2\right]&\leq2\frac{l^2_g+\sigma^2_{g1}}{\mu^2}\mathbb{E}\left[\left\|\zeta^N_{i,t-1}\right\|^2\right]\\
&\quad+2\mathbb{E}\left[\left\|A^{\mathcal{B}}_{i,t}(v^*_{i,t-1}-v^*_{i,t})\right\|^2\right].
\end{aligned}
\end{equation}

Since the relation $v^*_{i,t}=(A^{\mathcal{B}}_{i,t})^{-1}b^{\mathcal{B}}_{i,t}$ is always valid,
we have
\begin{equation}
\begin{aligned}
&\mathbb{E}\left[\left\|A^{\mathcal{B}}_{i,t}(v^*_{i,t-1}-v^*_{i,t})\right\|^2\right]\\
&=\mathbb{E}\left[\left\|\left[A^{\mathcal{B}}_{i,t}(A^{\mathcal{B}}_{i,t-1})^{-1}b^{\mathcal{B}}_{i,t-1}-b^{\mathcal{B}}_{i,t}\right]\right\|^2\right]\\
&=2\mathbb{E}\left[\left\|\left[(A^{\mathcal{B}}_{i,t}-A^{\mathcal{B}}_{i,t-1})(A^{\mathcal{B}}_{i,t-1})^{-1}b^{\mathcal{B}}_{i,t-1}\right]\right\|^2\right]\\
&~~~~+2\mathbb{E}\left[\left\|\left[b^{\mathcal{B}}_{i,t-1}-b^{\mathcal{B}}_{i,t}\right]\right\|^2\right].\label{eqad2}
\end{aligned}
\end{equation}

(a) To characterize~\eqref{eqad2}, we first estimate an upper bound on $\mathbb{E}\left[\left\|b^{\mathcal{B}}_{i,t-1}-b^{\mathcal{B}}_{i,t}\right\|^2\right]$.

According to the definition of
$b^{\mathcal{B}}_{i,t}$, we have
\begin{equation}
\begin{aligned}
&\mathbb{E}\left[\left\|b^{\mathcal{B}}_{i,t-1}-b^{\mathcal{B}}_{i,t}\right\|^2\right]\\
&=\mathbb{E}\left[\left\|\left[\nabla_{\phi}f_{i}(\theta_{t}, \phi_{i,t}^{0})-\nabla_{\phi}f_{i}(\theta_{t-1}, \phi_{i,t-1}^{0})\right]\right\|^2\right]\\
&\leq 3\mathbb{E}\left[\left\|\left[\nabla_{\phi}f_{i}(\theta_{t\!-\!1}, \phi_{i,t\!-\!1}^{0})\!-\!\nabla_{\phi}f(\theta_{t\!-\!1}, \phi_{i,t\!-\!1}^{0})\right]\right\|^2\right]\\
&\quad+3\mathbb{E}\left[\left\|\left[\nabla_{\phi}f(\theta_{t}, \phi_{i,t}^{0})-\nabla_{\phi}f(\theta_{t-1}, \phi_{i,t-1}^{0})\right]\right\|^2\right]\\
&\quad+3\mathbb{E}\left[\left\|\left[\nabla_{\phi}f_{i}(\theta_{t}, \phi_{i,t}^{0})-\nabla_{\phi}f(\theta_{t}, \phi_{i,t}^{0})\right]\right\|^2\right].\label{dfae}
\end{aligned}
\end{equation}

Assumption~2-(i) implies
\begin{equation}
\begin{aligned}
&\mathbb{E}\left[\left\|\left[\nabla_{\phi}f(\theta_{t}, \phi_{i,t}^{0})-\nabla_{\phi}f(\theta_{t-1}, \phi_{i,t-1}^{0})\right]\right\|^2\right]\\
&\leq{2l^2_{f}}\mathbb{E}\left[\left(\|\theta_t-\theta_{t-1}\|^2+|\phi^0_{i,t}-\phi^0_{i,t-1}\|^2 \right)\right].\label{fesdwe}
\end{aligned}
\end{equation}

Moreover, by using Assumption~2-(ii), we have
\begin{equation}
\begin{aligned}
\mathbb{E}\left[\left\|\left[\nabla_{\phi}f_{i}(\theta_{t}, \phi_{i,t}^{0})-\nabla_{\phi}f(\theta_{t}, \phi_{i,t}^{0})\right]\right\|^2\right]\leq\sigma^2_{f1}.\label{dfsdae}
\end{aligned}
\end{equation}

Substituting  \eqref{fesdwe} and \eqref{dfsdae} into \eqref{dfae} yields
\begin{flalign}
\mathbb{E}\left[\left\|\left[b^{\mathcal{B}}_{i,t-1}-b^{\mathcal{B}}_{i,t}\right]\right\|^2\right]\nonumber &\leq 6\sigma^2_{f1}+6{l^2_{f}}\left(\|\theta_t-\theta_{t-1}\|^2+\right.\\
&\left.\quad\|\phi^0_{i,t}-\phi^0_{i,t-1}\|^2\right).\label{dfasae}
\end{flalign}

(b) Then, we estimate an upper bound on the first term on the right-hand side of the second equation in~\eqref{eqad2} as follows:
\begin{equation}
\begin{aligned}
&\mathbb{E}\left[\left\|\left[(A^{\mathcal{B}}_{i,t}-A^{\mathcal{B}}_{i,t-1})(A^{\mathcal{B}}_{i,t-1})^{-1}b^{\mathcal{B}}_{i,t-1}\right]\right\|^2\right]\\
&=3\mathbb{E}\left[\left\|\left[(A^{\mathcal{B}}_{i,t}-A_{i,t})(A^{\mathcal{B}}_{i,t-1})^{-1}b^{\mathcal{B}}_{i,t-1}\right]\right\|^2\right]\\
&\quad+3\mathbb{E}\left[\left\|\left[(A_{i,t}-A_{i,t-1})(A^{\mathcal{B}}_{i,t-1})^{-1}b^{\mathcal{B}}_{i,t-1}\right]\right\|^2\right]\\
&\quad+3\mathbb{E}\left[\left\|\left[(A_{i,t-1}-A^{\mathcal{B}}_{i,t-1})(A^{\mathcal{B}}_{i,t-1})^{-1}b^{\mathcal{B}}_{i,t-1}\right]\right\|^2\right],\label{eqdasdaw}
\end{aligned}
\end{equation}
with
$A_{i,t}\triangleq\nabla_{\phi}^{2}g(\theta_{t},\phi_{i,t}^{0})$.

Next, we estimate the upper bounds on the three terms on the right-hand side of \eqref{eqdasdaw}.

(i)
By \eqref{dfw3rq11}, we have
\begin{equation}
\begin{aligned}\label{hjssaf-1}
&\mathbb{E}\left[\left\|\left[(A^{\mathcal{B}}_{i,t-1})^{-1}b^{\mathcal{B}}_{i,t-1}\right]\right\|^2\right]\leq2\mathbb{E}\left[\left\|\left[(A_{i,t-1})^{-1}b_{i,t-1}\right]\right\|^2\right]\\
&\quad+2\mathbb{E}\left[\left\|\left[(A^{\mathcal{B}}_{i,t-1})^{-1}b^{\mathcal{B}}_{i,t-1}-(A_{i,t-1})^{-1}b_{i,t-1}\right]\right\|^2\right].
\end{aligned}
\end{equation}

Assumptions~2-(i) implies
\begin{equation}
\begin{aligned}\label{hjssaf-2}
\mathbb{E}\left[\left\|\left[(A_{i,t-1})^{-1}b_{i,t-1}\right]\right\|^2\right]\leq\frac{l^2_{f,0}}{\mu^2}.
\end{aligned}
\end{equation}

Using Assumptions~2-(ii), we have
\begin{equation}
\begin{aligned}\label{hjsasf}
\mathbb{E}\left[\left\|\left[(A^{\mathcal{B}}_{i,t-1})^{-1}b^{\mathcal{B}}_{i,t-1}-(A_{i,t-1})^{-1}b_{i,t-1}\right]\right\|^2\right]\leq\sigma^2_{fg}.
\end{aligned}
\end{equation}

Substituting \eqref{hjssaf-2} and \eqref{hjsasf} into \eqref{hjssaf-1} yields
\begin{equation}
\begin{aligned}\label{hjsf}
\mathbb{E}\left[\left\|\left[(A^{\mathcal{B}}_{i,t-1})^{-1}b^{\mathcal{B}}_{i,t-1}\right]\right\|^2\right]\leq 2\frac{l^2_{f,0}}{\mu^2}+2\sigma^2_{fg}.
\end{aligned}
\end{equation}

Since the random variable $(A^{\mathcal{B}}_{i,t}-A_{i,t})$ is independent of
$(A^{\mathcal{B}}_{i,t-1})^{-1}b^{\mathcal{B}}_{i,t-1}$, we have
\begin{equation}\label{hjuw}
\begin{aligned}
&\mathbb{E}\left[\left\|\left[(A^{\mathcal{B}}_{i,t}-A_{i,t})(A^{\mathcal{B}}_{i,t-1})^{-1}b^{\mathcal{B}}_{i,t-1}\right]\right\|^2\right]\\
&=\mathbb{E}\left[\left\|\left[(A^{\mathcal{B}}_{i,t}-A_{i,t})\right\|^2\right]\mathbb{E}\left[\left\|(A^{\mathcal{B}}_{i,t-1})^{-1}b^{\mathcal{B}}_{i,t-1}\right]\right\|^2\right]\\
&\leq 2\left(\frac{l^2_{f,0}}{\mu^2}+\sigma^2_{fg}\right)\sigma^2_{g1}.
\end{aligned}
\end{equation}

(ii) Assumption~2 implies
\begin{equation}
\begin{aligned}\label{jidfr}
&\mathbb{E}\left[\left\|\left[(A_{i,t}-A_{i,t-1})(A^{\mathcal{B}}_{i,t-1})^{-1}b^{\mathcal{B}}_{i,t-1}\right]\right\|^2\right]\\
&\leq \left(\frac{l^2_{f,0}}{\mu^2}\!+\!\sigma^2_{fg}\right){l^2_{g}}\mathbb{E}\left[\left(\|\theta_t-\theta_{t-1}\|^2\!+\!|\phi^0_{i,t}-\phi^0_{i,t-1}\|^2 \right)\right].
\end{aligned}
\end{equation}

(iii)
By using Assumptions~1 and~2, we have
\begin{equation}\label{hidalojd}
\begin{aligned}
&\mathbb{E}\left[\left\|A_{i,t-1}(A^{\mathcal{B}}_{i,t-1})^{-1}b^{\mathcal{B}}_{i,t-1}-b^{\mathcal{B}}_{i,t-1}\right\|^2\right]\\
&\leq2\mathbb{E}\left[\left\|A_{i,t-1}(A^{\mathcal{B}}_{i,t-1})^{-1}b^{\mathcal{B}}_{i,t-1}\right\|^2\right]+2\mathbb{E}\left[\left\|b^{\mathcal{B}}_{i,t-1}\right\|^2\right]\\
&=2\left(\frac{l^2_{f,0}}{\mu^2}+\sigma^2_{fg}\right)\sigma^2_{g1}+2\sigma^2_{f1}.
\end{aligned}
\end{equation}

Substituting \eqref{hjuw}, \eqref{jidfr} and \eqref{hidalojd} into \eqref{eqdasdaw} yields
\begin{equation}
\begin{aligned}
&\mathbb{E}\left[\left\|\left[(A^{\mathcal{B}}_{i,t}-A^{\mathcal{B}}_{i,t-1})(A^{\mathcal{B}}_{i,t-1})^{-1}b^{\mathcal{B}}_{i,t-1}\right]\right\|^2\right]\\
&=3\left(\frac{l^2_{f,0}}{\mu^2}\!+\!\sigma^2_{fg}\right){l^2_{g}}\mathbb{E}\Big[\left(\|\theta_t\!-\!\theta_{t-1}\|^2\!+\!\|\phi^0_{i,t}\!-\!\phi^0_{i,t-1}\|^2\right)\Big]\\
&\quad+12\left(\frac{l^2_{f,0}}{\mu^2}+\sigma^2_{fg}\right)\sigma^2_{g1}+6\sigma^2_{f1}.\label{eqddtdaw}
\end{aligned}
\end{equation}

Moreover, by substituting \eqref{dfasae} and \eqref{eqddtdaw} into \eqref{eqad2}, we have
\begin{equation}
\begin{aligned}
&\mathbb{E}\left[\left\|A^{\mathcal{B}}_{i,t}(v^*_{i,t-1}-v^*_{i,t})\right\|^2\right]\\
&\leq\left[6\left(\frac{l^2_{f,0}}{\mu^2}+\sigma^2_{fg}\right){l^2_{g}}+12l^2_f\right]\mathbb{E}\bigg[\Big(\|\theta_t-\theta_{t-1}\|^2\\
&\quad+\|\phi^0_{i,t}-\phi^0_{i,t-1}\|^2 \Big)\bigg]+24\left(\frac{l^2_{f,0}}{\mu^2}+\sigma^2_{fg}\right)\sigma^2_{g1}+24\sigma^2_{f1}.\label{eqad21}
\end{aligned}
\end{equation}

According to Lemma~1, we have $\mathbb{E}\left[\|\zeta^N_{i,t}\|^2\right]\leq\Gamma_N \mathbb{E}\left[\|\zeta^0_{i,t}\|^2\right].$ There exists an integer $N$ not greater than $n$ such that $\Gamma_N$ is smaller than $\frac{1}{|\mathcal{B}|}$.
By using~\eqref{khgd} and~\eqref{eqad21}, we arrive at \eqref{kjhed}.
\end{proof}

\subsection{Proof of Theorem 1}\label{SectionS22}
\begin{proof}
Based on Step 14 in Algorithm~1, we have
\begin{equation}
\begin{aligned}
F(\theta_{t+1}) &\!\le\! F(\theta_{t})\!+\!\langle\nabla F(\theta_{t}),\theta_{t+1}-\theta_{t}\rangle\!+\!\frac{L_{F}}{2} {\| \theta_{t+1}-\theta_t \|}^2\\
&\le F(\theta_{t})-\lambda_{\theta}\langle\nabla F(\theta_{t}),\nabla\widehat{F}_{\mathcal{B}} (\theta_t)\rangle\\
&\quad+\frac{\lambda_{\theta}^2 L_{F}}{2} {\| \nabla\widehat{F}_{\mathcal{B}}(\theta_t)\|}^2,\label{eqasew5}
\end{aligned}
\end{equation}
where the constant $L_F$ is defined the same as it given in Lemma~\ref{lemma-1}.

Denote $\Xi_t=\langle\nabla F(\theta_{t}),\nabla\widehat{F}_{\mathcal{B}} (\theta_t)-\nabla F(\theta_{t})\rangle$. Then, we have
\begin{equation}
\begin{aligned}
\langle\nabla F(\theta_{t}),\nabla\widehat{F}_{\mathcal{B}} (\theta_t)\rangle= \Xi_t + \|\nabla F(\theta_{t})\|^2.\label{eqgs56}
\end{aligned}
\end{equation}

Substituting \eqref{eqgs56} into \eqref{eqasew5} yields
\begin{equation}
\begin{aligned}
\mathbb{E}[F(\theta_{t+1})] &\leq \frac{\lambda_{\theta}^2 L_{F}}{2}\mathbb{E}\left[{\|\nabla\widehat{F}_{\mathcal{B}}(\theta_{t})\!-\!\nabla F(\theta_{t})\|}^2\right]+\mathbb{E}[\Xi_t]\\
&\quad+\mathbb{E}[ F(\theta_{t})]-{\lambda_{\theta}}\mathbb{E}[\|\nabla F(\theta_{t})\|^2].\label{eq-57}
\end{aligned}
\end{equation}

By using the relation $\mathbb{E}[\Xi_t]=0$, we have
\begin{equation}
\begin{aligned}
{\lambda_{\theta}}\mathbb{E}[\|\nabla F(\theta_{t})\|^2] &\leq \lambda_{\theta}^2 L_{F}\frac{\bar{\sigma}^2}{|\mathcal{B}|}+\mathbb{E}[ F(\theta_{t})]-\mathbb{E}[F(\theta_{t+1})]\\
&\quad+\frac{2\lambda_{\theta}}{|\mathcal{B}|}\sum_{i=1}^{|\mathcal{B}|}\mathbb{E}\left[\left\|\zeta_{i,t}^N   \right\|^2\right].\label{eqsa57}
\end{aligned}
\end{equation}

Based on Step 14 in Algorithm~1, we have
\begin{equation}
\begin{aligned}
&\|\theta_t-\theta_{t-1}\|^2\leq\lambda^2_{\theta}\|\nabla\widehat{F}_{\mathcal{B}}(\theta_{t-1})^2\|\\
&\leq2\lambda^2_{\theta}\|\nabla{F}(\theta_{t-1})^2\|+2\lambda^2_{\theta}\|\nabla{F}(\theta_{t-1})-\nabla\widehat{F}_{\mathcal{B}}(\theta_{t-1})\|^2.\label{eakf10}
\end{aligned}
\end{equation}

By using~\eqref{jskag}, we have
\begin{equation}
\begin{aligned}
\mathbb{E}[\|\theta_t-\theta_{t-1}\|^2]&\leq2\lambda^2_{\theta}\mathbb{E}[\|\nabla{F}(\theta_{t-1})^2\|]\\
&\quad+4\lambda^2_{\theta}\frac{\bar{\sigma}^2}{|\mathcal{B}|}+\frac{2\lambda^2_{\theta}}{|\mathcal{B}|}\sum_{i=1}^{|\mathcal{B}|}\mathbb{E}\left[\left\|\zeta_{i,t}^N   \right\|^2\right].\label{eqsage0}
\end{aligned}
\end{equation}

Applying the inequalities~\eqref{dfw3rq11} and \eqref{dawfa}, we have
\begin{flalign}
\label{agye}
&\mathbb{E}[\|\phi^0_{i,t}-\phi^0_{i,t-1}\|^2]\leq 3\mathbb{E}[\|\phi^0_{i,t}-\phi^*_{i,t}\|^2]+\frac{24\sigma^2_{g1}}{\mu^2}\nonumber\\
&\quad+ \frac{6l_f^2\lambda^2_{\theta}}{\mu}\mathbb{E}\left[\|\nabla\widehat{F}_{\mathcal{B}}(\theta_{t})\|^2\right]+3\mathbb{E}[\|\phi^0_{i,t-1}-\phi^*_{i,t-1}\|^2]\nonumber\\
&\leq 3\mathbb{E}[\|\phi^0_{i,t}-\phi^*_{i,t}\|^2]+ 3\mathbb{E}[\|\phi^0_{i,t-1}\!-\!\phi^*_{i,t-1}\|^2]\nonumber\\
&\quad+\frac{12l^2_f\lambda^2_{\theta}}{\mu^2}\left(\mathbb{E}[\|\nabla\widehat{F}_{\mathcal{B}}(\theta_{t\!-\!1})\!-\!\nabla F(\theta_{t\!-\!1})\|^2]\right.\nonumber\\
&\quad\left.+\mathbb{E}[\|\nabla F(\theta_{t-1})\|^2]\right)+\frac{24\sigma^2_{g1}}{\mu^2}.
\end{flalign}

By using \eqref{jskag}, we have
\begin{flalign}
\label{afaye}
&\mathbb{E}[\|\phi^0_{i,t}-\phi^0_{i,t-1}\|^2]\leq 3\mathbb{E}[\|\phi^0_{i,t}-\phi^*_{i,t}\|^2]+\frac{24l^2_f\lambda^2_{\theta}}{\mu^2}\frac{\bar{\sigma}^2}{|\mathcal{B}|}\nonumber\\
&\quad+3\mathbb{E}[\|\phi^0_{i,t-1}-\phi^*_{i,t-1}\|^2]\nonumber+\frac{12l^2_f\lambda^2_{\theta}}{\mu^2}\mathbb{E}[\|\nabla F(\theta_{t-1})\|^2]\nonumber\\
&\quad+\frac{24l^2_f\lambda^2_{\theta}}{\mu^2|\mathcal{B}|}\sum_{i=1}^{|\mathcal{B}|}\mathbb{E}\left[\left\|\zeta^N_{i,t}   \right\|^2\right]+\frac{24\sigma^2_{g1}}{\mu^2}.
\end{flalign}

Substituting \eqref{eqsage0} and \eqref{afaye} into \eqref{kjhed} yields
\begin{equation}\label{kdajhed}
\begin{aligned}
&\mathbb{E}\left[\|\zeta^N_{i,t}\|^2\right]\leq2\Gamma_N\frac{l^2_g+\sigma^2_{g1}}{\mu^2} \mathbb{E}\left[\|\zeta^N_{i,t-1}\|^2\right]\\
&\quad+3\Gamma_N C_1\Big(\mathbb{E}[\|\phi^0_{i,t}-\phi^*_{i,t}\|^2]+\mathbb{E}[\|\phi^0_{i,t-1}-\phi^*_{i,t-1}\|^2]\Big)\\
&\quad+\frac{72C_1l^2_f\lambda^2_{\theta}}{\mu^2}\frac{\bar{\sigma}^2}{|\mathcal{B}|^2}+\frac{72C_1\sigma^2_{g1}}{\mu^2|\mathcal{B}|}+\frac{C_2}{|\mathcal{B}|}+12C_1\lambda^2_{\theta}\frac{\bar{\sigma}^2}{|\mathcal{B}|^2}\\
&\quad+\left(\frac{72C_1l^2_f\lambda^2_{\theta}}{\mu^2|\mathcal{B}|^2}+\frac{6C_1\lambda^2_{\theta}}{|\mathcal{B}|^2}\right)\sum_{i=1}^{|\mathcal{B}|}\mathbb{E}\left[\left\|\zeta^N_{i,t}   \right\|^2\right]\\
&\quad+\left(\frac{36C_1l^2_f\lambda^2_{\theta}}{\mu^2|\mathcal{B}|}+\frac{6C_1\lambda^2_{\theta}}{|\mathcal{B}|}\right)\mathbb{E}[\|\nabla F(\theta_{t-1})\|^2].
\end{aligned}
\end{equation}

Substituting \eqref{jskag} into \eqref{egse} yields
\begin{equation}\label{daadawe}
\begin{aligned}
&\mathbb{E}\left[\|\phi^{0}_{i,t+1}-\phi_{i,t+1}^*\|^2\right] \leq 2(1-{\mu}\lambda_{\phi})^K \mathbb{E}\left[\|\phi^{0}_{i,t}-\phi_{i,t}^*\|^2 \right] \\
&\quad +8 \frac{l^2_f \lambda^2_{\theta}}{\mu^2}\mathbb{E}\left[\|\nabla{F}(\theta_{t})\|^2\right]+16 \frac{l^2_f \lambda^2_{\theta}}{\mu^2}\frac{\bar{\sigma}^2}{|\mathcal{B}|}\\
&\quad +16 \frac{l^2_f \lambda^2_{\theta}}{\mu^2|\mathcal{B}|}\sum_{i=1}^{|\mathcal{B}|}\mathbb{E}\left[\left\|\zeta_{i,t}^N   \right\|^2\right]+\frac{16\sigma^2_{g1}}{\mu^2} .
\end{aligned}
\end{equation}

By choosing the number of inner-loop iteration $N$ satisfying
\begin{equation}
\begin{aligned}
N>&N_0=\Bigg[2 \log(\kappa)+\max\{2\log(l_{f,0}),0\}+3\log(2)\\
&+\max\left\{\log\left(\frac{2\sigma^2_{f1}l^2_g}{\mu^2}+\frac{2\sigma^2_{g1}l^2_gl^2_{f,0}}{\mu^4}\right),0\right\}\\
&+\max\{\log(36l_f^2),0\} \Bigg]/\log(\gamma^{-1}),
\end{aligned}
\end{equation}
we have
\begin{equation}
\begin{aligned}
2\Gamma_N\frac{l^2_g+\sigma^2_{g1}}{\mu^2}<\frac{1}{3}~\text{and}~1-3\Gamma_NC_1>\frac{2}{3}.
\end{aligned}
\end{equation}

Moreover, we choose the iteration number of Subroutine~1 $K$ satisfying
\begin{equation}
\begin{aligned}
K\geq K_0=\frac{\log(6)}{-\log(1-\mu\lambda_{\phi})}.
\end{aligned}
\end{equation}
Then, we have
\begin{equation}
\begin{aligned}
2(1-{\mu}\lambda_{\phi})^K<\frac{1}{3},
\end{aligned}
\end{equation}
which implies
\begin{equation}
\begin{aligned}
\frac{2(1-{\mu}\lambda_{\phi})^K+3\Gamma_NC_1}{1-3\Gamma_NC_1}<1.
\end{aligned}
\end{equation}

For notional simplicity, we define\\
\begin{equation}
\begin{aligned}\label{hgeaw}
\Xi_t&:=\mathbb{E}\left[\|\zeta^N_{t}\|^2\right]+\left(1-3\Gamma_NC_1\right)\\
&\qquad \times\frac{1}{|\mathcal{B}|}\sum_{i=1}^{|\mathcal{B}|}\mathbb{E}\left[\|\phi^0_{i,t}-\phi^*_{i,t}\|^2\right],\\
\rho&:=\max\left\{2\Gamma_N\frac{l^2_g+\sigma^2_{g1}}{\mu^2}+16 \frac{l^2_f \lambda^2_{\theta}}{\mu^2|\mathcal{B}|},\right.\\
&\left.\quad\quad\quad\quad\quad\quad\quad\quad\quad\frac{2(1-{\mu}\lambda_{\phi})^K+3\Gamma_NC_1}{1-3\Gamma_NC_1}\right\},\\
D_1&:=\frac{36C_1l^2_f\lambda^2_{\theta}}{\mu^2|\mathcal{B}|}+\frac{6C_1\lambda^2_{\theta}}{|\mathcal{B}|}+8 \frac{l^2_f \lambda^2_{\theta}}{\mu^2},\\
D_2&:=\frac{72C_1l^2_f\lambda^2_{\theta}}{\mu^2}\frac{\bar{\sigma}^2}{|\mathcal{B}|}+\frac{72C_1\sigma^2_{g1}}{\mu^2}+{C_2}\\
&\quad+12C_1\lambda^2_{\theta}\frac{\bar{\sigma}^2}{|\mathcal{B}|}+16 \frac{l^2_f \lambda^2_{\theta}}{\mu^2}\frac{\bar{\sigma}^2}{|\mathcal{B}|}.
\end{aligned}
\end{equation}

By combining \eqref{kdajhed} with \eqref{daadawe}, we have
\begin{equation}\label{adlkas-1}
\begin{aligned}
\Xi_{t+1}\leq \rho \Xi_t+D_1\mathbb{E}[\|\nabla F(\theta_{t})\|^2]+\frac{D_2}{|\mathcal{B}|}.
\end{aligned}
\end{equation}

By using \eqref{adlkas-1} and \eqref{eqsa57}, we have
\begin{equation}\label{aaswqas}
\begin{aligned}
&\sum_{t=0}^{T-1}\left[\left(\frac{\lambda_{\theta}}{2}-\lambda^2_{\theta}\right)\mathbb{E}[\|\nabla F(\theta_{t})\|^2]+\Xi_{t+1}\right]\\
&\leq\mathbb{E}[F(\theta_{0})]-\inf_{\theta}F(\theta)+\sum_{t=0}^T(2\lambda_{\theta}+2\lambda_{\theta}^2L_{F}+\rho)\Xi_t\\
&\quad+\sum_{t=0}^{T-1}D_1\mathbb{E}[\|\nabla F(\theta_{t})\|^2]+T\left[\frac{D_2}{|\mathcal{B}|}+2\lambda_{\theta}^2L_{F}\frac{\bar{\sigma}^2}{|\mathcal{B}|}\right].
\end{aligned}
\end{equation}

By choosing a sufficiently small parameter $\lambda_{\theta}$, the following inequalities hold simultaneously:
\begin{equation}\label{awqas}
\begin{aligned}
\frac{\lambda_{\theta}}{2}-\lambda^2_{\theta}+D_1\geq\frac{\lambda_{\theta}}{4},~\text{and}~
2\lambda_{\theta}+2\lambda_{\theta}^2L_{F}+\rho\leq 1.
\end{aligned}
\end{equation}
In this case, we have
\begin{equation}
\begin{aligned}
\sum_{t=0}^{T}\left[\frac{\lambda_{\theta}}{4}\mathbb{E}[\|\nabla F(\theta_{t})\|^2]\right]&\leq\mathbb{E}[F(\theta_{0})]-\inf_{\theta}F(\theta)+\Xi_0\\
&\quad+\!(T\!+\!1)\left[\frac{D_2}{|\mathcal{B}|}+2\lambda_{\theta}^2L_{F}\frac{\bar{\sigma}^2}{|\mathcal{B}|}\right].
\end{aligned}
\end{equation}

Multiplying both sides of the above expression by
$\frac{1}{T+1}$, we have
\begin{equation}\label{huifre}
\begin{aligned}
&\frac{1}{T+1}\sum_{t=0}^T\left[\frac{\lambda_{\theta}}{4}\mathbb{E}[\|\nabla F(\theta_{t})\|^2]\right]\\
&\leq\frac{1}{T+1}\left[\mathbb{E}[F(\theta_{0})]-\inf_{\theta}F(\theta)+\Xi_0\right]+\frac{D_2}{|\mathcal{B}|}\\
&\quad+(\lambda_{\theta}+2\lambda_{\theta}^2L_{F})\frac{\bar{\sigma}^2}{|\mathcal{B}|},
\end{aligned}
\end{equation}
which implies
\begin{equation}
\begin{aligned}
\frac{1}{T+1}\sum_{t=0}^{T}\left[\frac{\lambda_{\theta}}{4}\mathbb{E}[\|\nabla F(\theta_{t})\|^2]\right]\leq\mathcal{O}\left(\frac{1}{T}\right)+\mathcal{O}\left(\frac{1}{|\mathcal{B}|}\right).
\end{aligned}
\end{equation}
\end{proof}

\subsection{Proof of Corollary 1}\label{SectionS23}
\begin{proof}
According to Theorem 1, the convergence rate of Algorithm 1 is $\mathcal{O}\left(\frac{1}{T}\right)+\mathcal{O}\left(\frac{1}{|\mathcal{B}|}\right)$. Hence, to find an
$\epsilon$-optimal solution, the number of outer-loop iterations $T$ needs to satisfy $T=\mathcal{O}(\epsilon^{-1})$.
At each outer-loop iteration, Subroutine 1 requires
$3|\mathcal{B}|$ gradient evaluations in $\phi$; Step 8 in Algorithm 1  requires $K_0|\mathcal{B}|$ gradient evaluations in $\phi$; Step 11 in Algorithm 1 requires $K_0|\mathcal{B}|$ gradient evaluations in $\phi$ and  $K_0|\mathcal{B}|$ gradient evaluations in $\theta$; and Step 13 in Algorithm 1 requires one gradient evaluation  in $\theta$. Summing these results, we obtain our final conclusion: Algorithm~1 requires at most $(3+2K_{0})|\mathcal{B}|$ gradient evaluations in $\phi$ and $1+K_{0}|\mathcal{B}|$ gradient evaluations in $\theta$ at each outer-loop iteration.

Considering $T=\mathcal{O}(\epsilon^{-1})$, we can conclude that Corollary~1 holds.
\end{proof}

\subsection{Proof of Theorem 2}\label{SectionS24}
\begin{proof}
The proof of Theorem~2 closely resembles that of Theorem 1. Therefore, we will only highlight the key differences to complete the proof of Theorem~2.

First, according to \citet{shewchuk1994introduction}, we have
\begin{equation}
\begin{aligned}
\|{e}_{i,t}^N\|^2_{A^{\mathcal{B}}_{i,t}}\leq\left(\frac{\sqrt{\kappa}+1}{\sqrt{\kappa}-1}\right)^2N\|{e}_{i,t}^0\|^2_{A^{\mathcal{B}}_{i,t}}.
\end{aligned}
\end{equation}

Since the gradient function $\nabla F$ is deterministic, \eqref{jskag} can be rewritten as follows:
\begin{equation}
\begin{aligned}
\|\nabla\widehat{F}_{\mathcal{B}}(\theta_{t})-\nabla F(\theta_{t})\|^2\leq\frac{1}{|\mathcal{B}|}\sum_{i=1}^{|\mathcal{B}|}\left\|A_{i,t}^{\mathcal{B}} e_{i,t}^N   \right\|^2.
\end{aligned}
\end{equation}

Similarly, \eqref{egse} can be rewritten as follows:
\begin{equation}
\begin{aligned}
\|\phi^{0}_{i,t+1}-\phi_{i,t+1}^*\|^2&\leq 2(1-{\mu}\lambda_{\phi})^K \|\phi^{0}_{i,t}-\phi_{i,t}^*\|^2   \\
&\quad+2 \frac{l^2_f \lambda^2_{\theta}}{\mu^2}2\|\nabla\widehat{F}_{\mathcal{B}}(\theta_{t})\|^2.
\end{aligned}
\end{equation}
We can obtain the deterministic version of Equation \eqref{kjhed} as follows:
\begin{equation}\label{kjhded}
\begin{aligned}
\|\zeta^N_{i,t}\|^2 &\leq2\kappa^2\left(\frac{\sqrt{\kappa}+1}{\sqrt{\kappa}-1}\right)^{2N}\Big(\|\zeta^N_{i,t-1}\|^2\\
&\quad+\|\theta_t-\theta_{t-1}\|^2 + \|\phi^0_{i,t}-\phi^0_{i,t\!-\!1}\|^2\Big).
\end{aligned}
\end{equation}

We define
\begin{equation}
\begin{aligned}
\Xi_t&:=\|\zeta^N_{t}\|^2+\left(1-2v\right)\frac{1}{|\mathcal{B}|}\sum_{i=1}^{|\mathcal{B}|}\|\phi^0_{i,t}-\phi^*_{i,t}\|^2,\\
\rho&:=\max\left\{2v,~\frac{2(1-{\mu}\lambda_{\phi})^K+2v}{1-2v}\right\},\\
D_1&:=4v\lambda^2_{\theta},
\end{aligned}
\end{equation}
where $v=\kappa^2\left(\frac{\sqrt{\kappa}+1}{\sqrt{\kappa}-1}\right)^{2N}$.
Then, we can prove that when $K\geq \mathcal{O}(\kappa)$ and $N \geq\mathcal{O}\left(\sqrt{\kappa}\right)$, we have
\begin{equation}\label{adlkas}
\begin{aligned}
\Xi_{t+1}\leq \rho \Xi_t+D_1\|\nabla F(\theta_{t})\|^2.
\end{aligned}
\end{equation}
By using an argument similar to the derivation of \eqref{huifre}, we have
\begin{equation}\label{huifre-2}
\begin{aligned}
\frac{1}{T+1}\sum_{t=0}^T\!\left[\frac{\lambda_{\theta}}{2}\|\!\nabla \!F(\theta_{t})\!\|^2\right]\leq\frac{1}{T}\!\left[F(\theta_{0})-\inf_{\theta}F(\theta)+\Xi_0\right]\!,
\end{aligned}
\end{equation}
which implies
\begin{equation}
\begin{aligned}
\frac{1}{T+1}\sum_{t=0}^{T}\left[\|\nabla F(\theta_{t})\|^2\right]=\mathcal{O}\left(\frac{1}{T+1}\right).
\end{aligned}
\end{equation}
\end{proof}

\section{Additional Experimental Details}\label{SectionS3}

\subsection{Benchmark Datasets Used in Experiments}\label{SectionS32}
\noindent\textbf{CIFAR-FS}~\cite{Bertinetto_2018_bi_variant} is a widely utilized few-shot learning benchmark derived from the CIFAR-100 dataset~\cite{krizhevsky2009learning}. It consists of 100 classes grouped into 20 superclasses, with a distribution of 64, 16, and 20 classes for training, validation, and testing respectively. Each class includes 600 images with a resolution of $32\times32$ pixels.\vspace{0.1cm}

\noindent\textbf{FC100}~\cite{oreshkin2018tadam} is another few-shot learning benchmark newly derived from CIFAR-100. While it maintains a structure similar to the CIFAR-FS dataset, it features a distinct class distribution.
It also consists of 20 superclasses, divided into 12, 4, and 4 categories for the training, validation, and testing splits. This arrangement results in 60, 20, and 20 general classes across the respective splits, maintaining the same image count and resolution per class as CIFAR-FS.\vspace{0.1cm}

\noindent\textbf{miniImageNet}~\cite{vinyals2016matching-metric1} stands as a significant benchmark in the field of few-shot learning, designed to address the specific challenges of image classification. It is derived from the extensive ImageNet database but focuses on a subset of 100 classes. Each class is comprised of 600 images sized $84\times84$, resulting in a total of 60,000 images for the dataset.\vspace{0.1cm}

\noindent\textbf{tieredImageNet}~\cite{ren2018metalearningsemisupervisedfewshotclassification} is a more extensive subset of ILSVRC-12. It consists of 608 classes grouped into 34 superclasses that are partitioned into 20, 6, and 8 disjoint sets for training, validation, and testing respectively. This hierarchical organization not only makes it more challenging to understand the relations between classes but also simulates more complex classification tasks.

\subsection{Experimental Setup in Neural Network Training}\label{SectionS33}
\noindent\textbf{Network architecture.}\quad In the image classification experiments, we utilized a consistent network architecture for all comparison algorithms given by~\citet{arnold2020learn2learnlibrarymetalearningresearch}. Specifically, the backbone consists of four convolutional blocks, each of which includes $3\times3$ convolutions and filters, followed by batch normalization, ReLU activation, and $2\times2$ max-pooling in order. Following the setup in~\citet{arnold2020learn2learnlibrarymetalearningresearch}, the number of filters depends on the datasets. For ``CIFAR-FS'' and ``FC100'' datasets, we set it as $64$. For ``miniImageNet'' and ``tieredImageNet'' datasets, we set it as $32$.

\vspace{0.1cm}
\noindent\textbf{Hyperparameter selections.}\quad We briefly covered the setup of hyperparameters for each algorithm in Section~5. The detailed settings and selections are explained as follows. In the experiments conducted on ``CIFAR-FS'' and ``FC100'' datasets, for MAML, we set the inner-loop iteration number $K$ as 3 with a stepsize of $\lambda_{\phi}=0.5$, and the outer-loop stepsize $\lambda_{\theta}=0.001$; for ANIL, we set $K=10$ with a stepsize of $\lambda_{\phi}=0.1$, and the outer-loop stepsize $\lambda_{\theta}=0.0001$.
In the experiments conducted on ``miniImageNet'' and ``tieredImageNet'' datasets, for MAML, we set the inner-loop iteration number $K$ as 3 with a stepsize of $\lambda_{\phi}=0.5$, and the outer-loop stepsize $\lambda_{\theta}=0.005$; for ANIL, we set $K=5$ with a stepsize of $\lambda_{\phi}=0.1$, and the outer-loop stepsize $\lambda_{\theta}=0.001$.
In all experiments, for ITD-BiO and our algorithm, we conducted a grid search to choose $K$ from $\{5, 10, 15, 20\}$, $\lambda_{\phi}$ from $\{0.01, 0.05, 0.1, 0.5\}$, and $\lambda_\theta$ from $\{0.0001, 0.001, 0.01, 0.1\}$. Specifically for our algorithm, we chose the iteration number $N$ of Subroutine~1 from $\{5, 10, 15, 20\}$.

\section*{Acknowledgments}
This work was supported in part by the National Science and Technology Major Project under Grant 2021ZD0112600 and in part by the National Natural Science Foundation of China under Grants 62103343.

\bibliographystyle{aaai25}
\bibliography{aaai25}

\begin{thebibliography}{62}
\providecommand{\natexlab}[1]{#1}

\bibitem[{Antoniou, Edwards, and Storkey(2019)}]{antoniou2019trainmaml}
Antoniou, A.; Edwards, H.; and Storkey, A. 2019.
\newblock How to train your MAML.
\newblock arXiv:1810.09502.

\bibitem[{Arnold et~al.(2020)Arnold, Mahajan, Datta, Bunner, and
  Zarkias}]{arnold2020learn2learnlibrarymetalearningresearch}
Arnold, S. M.~R.; Mahajan, P.; Datta, D.; Bunner, I.; and Zarkias, K.~S. 2020.
\newblock learn2learn: A Library for Meta-Learning Research.
\newblock arXiv:2008.12284.

\bibitem[{Baik et~al.(2021)Baik, Choi, Kim, Cho, Min, and Lee}]{baik2021meta}
Baik, S.; Choi, J.; Kim, H.; Cho, D.; Min, J.; and Lee, K.~M. 2021.
\newblock Meta-learning with task-adaptive loss function for few-shot learning.
\newblock In \emph{Proceedings of the IEEE/CVF international conference on
  computer vision}, 9465--9474.

\bibitem[{Bertinetto et~al.(2018)Bertinetto, Henriques, Torr, and
  Vedaldi}]{Bertinetto_2018_bi_variant}
Bertinetto, L.; Henriques, J.~F.; Torr, P. H.~S.; and Vedaldi, A. 2018.
\newblock Meta-learning with differentiable closed-form solvers.
\newblock \emph{International Conference on Learning Representations}.

\bibitem[{Chen et~al.(2022)Chen, Sun, Xiao, and Yin}]{chen2022single}
Chen, T.; Sun, Y.; Xiao, Q.; and Yin, W. 2022.
\newblock A single-timescale method for stochastic bilevel optimization.
\newblock In \emph{International Conference on Artificial Intelligence and
  Statistics}, 2466--2488. PMLR.

\bibitem[{Chen, Sun, and Yin(2021)}]{chen2021closing}
Chen, T.; Sun, Y.; and Yin, W. 2021.
\newblock Closing the gap: Tighter analysis of alternating stochastic gradient
  methods for bilevel problems.
\newblock \emph{Advances in Neural Information Processing Systems}, 34:
  25294--25307.

\bibitem[{Chen et~al.(2024)Chen, Liang, Li, and Cheng}]{chen2023quantized}
Chen, Z.; Liang, S.; Li, L.; and Cheng, S. 2024.
\newblock Quantized Primal-Dual Algorithms for Network Optimization With Linear
  Convergence.
\newblock \emph{IEEE Transactions on Automatic Control}, 69(1): 471--478.

\bibitem[{Chen and Wang(2024{\natexlab{a}})}]{pmlr-v235-chen24ap}
Chen, Z.; and Wang, Y. 2024{\natexlab{a}}.
\newblock Locally Differentially Private Decentralized Stochastic Bilevel
  Optimization with Guaranteed Convergence Accuracy.
\newblock In Salakhutdinov, R.; Kolter, Z.; Heller, K.; Weller, A.; Oliver, N.;
  Scarlett, J.; and Berkenkamp, F., eds., \emph{Proceedings of the 41st
  International Conference on Machine Learning}, volume 235 of
  \emph{Proceedings of Machine Learning Research}, 7389--7439. PMLR.

\bibitem[{Chen and Wang(2024{\natexlab{b}})}]{chen2024locally}
Chen, Z.; and Wang, Y. 2024{\natexlab{b}}.
\newblock Locally Differentially Private Gradient Tracking for Distributed
  Online Learning over Directed Graphs.
\newblock \emph{IEEE Transactions on Automatic Control}, 1--16.

\bibitem[{Chi et~al.(2022)Chi, Gu, Liu, Wang, Yu, and Tang}]{chi2022metafscil}
Chi, Z.; Gu, L.; Liu, H.; Wang, Y.; Yu, Y.; and Tang, J. 2022.
\newblock Metafscil: A meta-learning approach for few-shot class incremental
  learning.
\newblock In \emph{Proceedings of the IEEE/CVF conference on computer vision
  and pattern recognition}, 14166--14175.

\bibitem[{Dagr{\'e}ou et~al.(2022)Dagr{\'e}ou, Ablin, Vaiter, and
  Moreau}]{dagreou2022framework-variance_reduction}
Dagr{\'e}ou, M.; Ablin, P.; Vaiter, S.; and Moreau, T. 2022.
\newblock A framework for bilevel optimization that enables stochastic and
  global variance reduction algorithms.
\newblock \emph{Advances in Neural Information Processing Systems}, 35:
  26698--26710.

\bibitem[{Denevi et~al.(2019)Denevi, Stamos, Ciliberto, and
  Pontil}]{denevi2019online}
Denevi, G.; Stamos, D.; Ciliberto, C.; and Pontil, M. 2019.
\newblock Online-within-online meta-learning.
\newblock \emph{Advances in Neural Information Processing Systems}, 32.

\bibitem[{Domke(2012)}]{domke2012generic}
Domke, J. 2012.
\newblock Generic methods for optimization-based modeling.
\newblock In \emph{Artificial Intelligence and Statistics}, 318--326. PMLR.

\bibitem[{Fallah, Mokhtari, and Ozdaglar(2020)}]{fallah2020convergence}
Fallah, A.; Mokhtari, A.; and Ozdaglar, A. 2020.
\newblock On the convergence theory of gradient-based model-agnostic
  meta-learning algorithms.
\newblock In \emph{International Conference on Artificial Intelligence and
  Statistics}, 1082--1092. PMLR.

\bibitem[{Fang et~al.(2018)Fang, Li, Lin, and
  Zhang}]{fang2018spider-variance_reduction}
Fang, C.; Li, C.~J.; Lin, Z.; and Zhang, T. 2018.
\newblock Spider: Near-optimal non-convex optimization via stochastic
  path-integrated differential estimator.
\newblock \emph{Advances in neural information processing systems}, 31.

\bibitem[{Finn, Abbeel, and Levine(2017)}]{MAML}
Finn, C.; Abbeel, P.; and Levine, S. 2017.
\newblock Model-Agnostic {Meta}-Learning for Fast Adaptation of Deep Networks.
\newblock \emph{International Conference on Machine Learning}.

\bibitem[{Finn et~al.(2017)Finn, Yu, Zhang, Abbeel, and
  Levine}]{finn2017one-variant}
Finn, C.; Yu, T.; Zhang, T.; Abbeel, P.; and Levine, S. 2017.
\newblock One-shot visual imitation learning via meta-learning.
\newblock In \emph{Conference on robot learning}, 357--368. PMLR.

\bibitem[{Franceschi et~al.(2017)Franceschi, Donini, Frasconi, and
  Pontil}]{Franceschi2017ForwardAR}
Franceschi, L.; Donini, M.; Frasconi, P.; and Pontil, M. 2017.
\newblock Forward and Reverse Gradient-Based Hyperparameter Optimization.
\newblock In \emph{International Conference on Machine Learning}.

\bibitem[{Franceschi et~al.(2018)Franceschi, Frasconi, Salzo, Grazzi, and
  Pontil}]{Franceschi_2018_bi_variant}
Franceschi, L.; Frasconi, P.; Salzo, S.; Grazzi, R.; and Pontil, M. 2018.
\newblock Bilevel Programming for Hyperparameter Optimization and
  {Meta}-Learning.
\newblock \emph{International Conference on Machine Learning}.

\bibitem[{Ghadimi and
  Wang(2018)}]{ghadimi2018approximationmethodsbilevelprogramming}
Ghadimi, S.; and Wang, M. 2018.
\newblock Approximation Methods for Bilevel Programming.
\newblock arXiv:1802.02246.

\bibitem[{Gower et~al.(2019)Gower, Loizou, Qian, Sailanbayev, Shulgin, and
  Richt{\'a}rik}]{gower2019sgd_finite-sample}
Gower, R.~M.; Loizou, N.; Qian, X.; Sailanbayev, A.; Shulgin, E.; and
  Richt{\'a}rik, P. 2019.
\newblock SGD: General analysis and improved rates.
\newblock In \emph{International conference on machine learning}, 5200--5209.
  PMLR.

\bibitem[{Grazzi et~al.(2020)Grazzi, Franceschi, Pontil, and
  Salzo}]{grazzi2020iteration}
Grazzi, R.; Franceschi, L.; Pontil, M.; and Salzo, S. 2020.
\newblock On the iteration complexity of hypergradient computation.
\newblock In \emph{Proc. International Conference on Machine Learning (ICML)}.

\bibitem[{He et~al.(2016)He, Zhang, Ren, and Sun}]{traditionalML1}
He, K.; Zhang, X.; Ren, S.; and Sun, J. 2016.
\newblock Deep residual learning for image recognition.
\newblock In \emph{Proceedings of the IEEE conference on computer vision and
  pattern recognition}, 770--778.

\bibitem[{Hiller, Harandi, and Drummond(2022)}]{hiller2022enforcing-hessian1}
Hiller, M.; Harandi, M.; and Drummond, T. 2022.
\newblock On enforcing better conditioned meta-learning for rapid few-shot
  adaptation.
\newblock \emph{Advances in Neural Information Processing Systems}, 35:
  4059--4071.

\bibitem[{Jamal, Wang, and Gong(2021)}]{jamal2021lazy_itd-problem}
Jamal, M.~A.; Wang, L.; and Gong, B. 2021.
\newblock A lazy approach to long-horizon gradient-based meta-learning.
\newblock In \emph{Proceedings of the IEEE/CVF International Conference on
  Computer Vision}, 6577--6586.

\bibitem[{Ji et~al.(2020)Ji, Lee, Liang, and Poor}]{ji2020convergence}
Ji, K.; Lee, J.~D.; Liang, Y.; and Poor, H.~V. 2020.
\newblock Convergence of meta-learning with task-specific adaptation over
  partial parameters.
\newblock \emph{Advances in Neural Information Processing Systems}, 33:
  11490--11500.

\bibitem[{Ji, Yang, and Liang(2020)}]{Ji_2020_AID}
Ji, K.; Yang, J.; and Liang, Y. 2020.
\newblock Bilevel Optimization: Convergence Analysis and Enhanced Design.
\newblock \emph{International Conference on Machine Learning}.

\bibitem[{Kingma and Ba(2017)}]{kingma2017adammethodstochasticoptimization}
Kingma, D.~P.; and Ba, J. 2017.
\newblock Adam: A Method for Stochastic Optimization.
\newblock arXiv:1412.6980.

\bibitem[{Kirsch et~al.(2022)Kirsch, Flennerhag, Van~Hasselt, Friesen, Oh, and
  Chen}]{kirsch2022introducing}
Kirsch, L.; Flennerhag, S.; Van~Hasselt, H.; Friesen, A.; Oh, J.; and Chen, Y.
  2022.
\newblock Introducing symmetries to black box meta reinforcement learning.
\newblock In \emph{Proceedings of the AAAI Conference on Artificial
  Intelligence}, volume~36, 7202--7210.

\bibitem[{Krizhevsky, Hinton et~al.(2009)}]{krizhevsky2009learning}
Krizhevsky, A.; Hinton, G.; et~al. 2009.
\newblock Learning multiple layers of features from tiny images.
\newblock \emph{Toronto, ON, Canada}.

\bibitem[{Lee, Yoo, and Kwak(2023)}]{lee2023shot}
Lee, J.; Yoo, J.; and Kwak, N. 2023.
\newblock SHOT: suppressing the hessian along the optimization trajectory for
  gradient-based meta-learning.
\newblock \emph{Advances in Neural Information Processing Systems}, 36:
  61450--61465.

\bibitem[{Li et~al.(2019)Li, Dong, Mei, Ma, Huang, and Hu}]{li2019lgm-model2}
Li, H.; Dong, W.; Mei, X.; Ma, C.; Huang, F.; and Hu, B.-G. 2019.
\newblock LGM-Net: Learning to generate matching networks for few-shot
  learning.
\newblock In \emph{International conference on machine learning}, 3825--3834.
  PMLR.

\bibitem[{Li et~al.(2021)Li, Lu, Cao, Cai, and Wang}]{li2021meta-il}
Li, J.; Lu, T.; Cao, X.; Cai, Y.; and Wang, S. 2021.
\newblock Meta-imitation learning by watching video demonstrations.
\newblock In \emph{International Conference on Learning Representations}.

\bibitem[{Lian et~al.(2017)Lian, Zhang, Zhang, Hsieh, Zhang, and
  Liu}]{lian2017can_decentralized}
Lian, X.; Zhang, C.; Zhang, H.; Hsieh, C.-J.; Zhang, W.; and Liu, J. 2017.
\newblock Can decentralized algorithms outperform centralized algorithms? a
  case study for decentralized parallel stochastic gradient descent.
\newblock \emph{Advances in neural information processing systems}, 30.

\bibitem[{Lorraine, Vicol, and Duvenaud(2020)}]{lorraine2020optimizing}
Lorraine, J.; Vicol, P.; and Duvenaud, D. 2020.
\newblock Optimizing millions of hyperparameters by implicit differentiation.
\newblock In \emph{International conference on artificial intelligence and
  statistics}, 1540--1552. PMLR.

\bibitem[{Maclaurin, Duvenaud, and Adams(2015)}]{maclaurin2015gradient}
Maclaurin, D.; Duvenaud, D.; and Adams, R. 2015.
\newblock Gradient-based hyperparameter optimization through reversible
  learning.
\newblock In \emph{International conference on machine learning}, 2113--2122.
  PMLR.

\bibitem[{Nichol, Achiam, and Schulman(2018)}]{Reptile}
Nichol, A.; Achiam, J.; and Schulman, J. 2018.
\newblock On First-Order Meta-Learning Algorithms.
\newblock arXiv:1803.02999.

\bibitem[{Oh et~al.(2021)Oh, Yoo, Kim, and Yun}]{BOIL_variant}
Oh, J.; Yoo, H.; Kim, C.; and Yun, S. 2021.
\newblock BOIL: Towards Representation Change for Few-shot Learning.
\newblock In \emph{International Conference on Learning Representations}.

\bibitem[{Oreshkin, Rodr{\'\i}guez~L{\'o}pez, and
  Lacoste(2018)}]{oreshkin2018tadam}
Oreshkin, B.; Rodr{\'\i}guez~L{\'o}pez, P.; and Lacoste, A. 2018.
\newblock Tadam: Task dependent adaptive metric for improved few-shot learning.
\newblock \emph{Advances in neural information processing systems}, 31.

\bibitem[{Park and Oliva(2019)}]{park2019meta-hessian2}
Park, E.; and Oliva, J.~B. 2019.
\newblock Meta-curvature.
\newblock \emph{Advances in neural information processing systems}, 32.

\bibitem[{Peng(2020)}]{peng2020comprehensiveoverviewsurveyrecent}
Peng, H. 2020.
\newblock A Comprehensive Overview and Survey of Recent Advances in
  Meta-Learning.
\newblock arXiv:2004.11149.

\bibitem[{Raghu et~al.(2019)Raghu, Raghu, Bengio, and Vinyals}]{ANIL}
Raghu, A.; Raghu, M.; Bengio, S.; and Vinyals, O. 2019.
\newblock Rapid Learning or Feature Reuse? Towards Understanding the
  Effectiveness of MAML.
\newblock \emph{International Conference on Learning Representations}.
\newblock Freezing backbone---body and head ANIL and NIL.

\bibitem[{Rajeswaran et~al.(2019)Rajeswaran, Finn, Kakade, and Levine}]{imaml}
Rajeswaran, A.; Finn, C.; Kakade, S.; and Levine, S. 2019.
\newblock Meta-Learning with {Implicit} Gradients.
\newblock \emph{Neural Information Processing Systems}.

\bibitem[{Ravi and Larochelle(2016)}]{ravi2016optimization}
Ravi, S.; and Larochelle, H. 2016.
\newblock Optimization as a model for few-shot learning.
\newblock In \emph{International conference on learning representations}.

\bibitem[{Reddi et~al.(2016)Reddi, Hefny, Sra, Poczos, and
  Smola}]{reddi2016stochastic-variance_reduction}
Reddi, S.~J.; Hefny, A.; Sra, S.; Poczos, B.; and Smola, A. 2016.
\newblock Stochastic variance reduction for nonconvex optimization.
\newblock In \emph{International conference on machine learning}, 314--323.
  PMLR.

\bibitem[{Ren et~al.(2018)Ren, Triantafillou, Ravi, Snell, Swersky, Tenenbaum,
  Larochelle, and
  Zemel}]{ren2018metalearningsemisupervisedfewshotclassification}
Ren, M.; Triantafillou, E.; Ravi, S.; Snell, J.; Swersky, K.; Tenenbaum, J.~B.;
  Larochelle, H.; and Zemel, R.~S. 2018.
\newblock Meta-Learning for Semi-Supervised Few-Shot Classification.
\newblock arXiv:1803.00676.

\bibitem[{Santoro et~al.(2016)Santoro, Bartunov, Botvinick, Wierstra, and
  Lillicrap}]{santoro2016meta-model1}
Santoro, A.; Bartunov, S.; Botvinick, M.; Wierstra, D.; and Lillicrap, T. 2016.
\newblock Meta-learning with memory-augmented neural networks.
\newblock In \emph{International conference on machine learning}, 1842--1850.
  PMLR.

\bibitem[{Shaban et~al.(2019)Shaban, Cheng, Hatch, and
  Boots}]{shaban2019truncated}
Shaban, A.; Cheng, C.-A.; Hatch, N.; and Boots, B. 2019.
\newblock Truncated back-propagation for bilevel optimization.
\newblock In \emph{The 22nd International Conference on Artificial Intelligence
  and Statistics}, 1723--1732. PMLR.

\bibitem[{Shewchuk et~al.(1994)}]{shewchuk1994introduction}
Shewchuk, J.~R.; et~al. 1994.
\newblock An introduction to the conjugate gradient method without the
  agonizing pain.
\newblock \emph{Carnegie-Mellon University. Department of Computer Science
  Pittsburgh}.

\bibitem[{Shu, Meng, and Xu(2021)}]{Shu_Meng_Xu_2021_meta-vs-ml}
Shu, J.; Meng, D.; and Xu, Z. 2021.
\newblock Learning an Explicit Hyper-parameter Prediction Function Conditioned
  on Tasks.
\newblock \emph{Journal of machine learning research}.

\bibitem[{Snell, Swersky, and Zemel(2017)}]{snell2017prototypical-metric2}
Snell, J.; Swersky, K.; and Zemel, R. 2017.
\newblock Prototypical networks for few-shot learning.
\newblock \emph{Advances in neural information processing systems}, 30.

\bibitem[{Soydaner(2020)}]{soydaner2020comparison}
Soydaner, D. 2020.
\newblock A comparison of optimization algorithms for deep learning.
\newblock \emph{International Journal of Pattern Recognition and Artificial
  Intelligence}, 34(13): 2052013.

\bibitem[{Ullah et~al.(2022)Ullah, Carri{\'o}n-Ojeda, Escalera, Guyon, Huisman,
  Mohr, van Rijn, Sun, Vanschoren, and Vu}]{ullah2022meta_image-classification}
Ullah, I.; Carri{\'o}n-Ojeda, D.; Escalera, S.; Guyon, I.; Huisman, M.; Mohr,
  F.; van Rijn, J.~N.; Sun, H.; Vanschoren, J.; and Vu, P.~A. 2022.
\newblock Meta-album: Multi-domain meta-dataset for few-shot image
  classification.
\newblock \emph{Advances in Neural Information Processing Systems}, 35:
  3232--3247.

\bibitem[{Vaswani et~al.(2017)Vaswani, Shazeer, Parmar, Uszkoreit, Jones,
  Gomez, Kaiser, and Polosukhin}]{traditionalML2}
Vaswani, A.; Shazeer, N.; Parmar, N.; Uszkoreit, J.; Jones, L.; Gomez, A.~N.;
  Kaiser, {\L}.; and Polosukhin, I. 2017.
\newblock Attention is all you need.
\newblock \emph{Advances in neural information processing systems}, 30.

\bibitem[{Vinyals et~al.(2016)Vinyals, Blundell, Lillicrap, Wierstra
  et~al.}]{vinyals2016matching-metric1}
Vinyals, O.; Blundell, C.; Lillicrap, T.; Wierstra, D.; et~al. 2016.
\newblock Matching networks for one shot learning.
\newblock \emph{Advances in neural information processing systems}, 29.

\bibitem[{Vuorio et~al.(2019)Vuorio, Sun, Hu, and
  Lim}]{vuorio2019multimodal_rl}
Vuorio, R.; Sun, S.-H.; Hu, H.; and Lim, J.~J. 2019.
\newblock Multimodal model-agnostic meta-learning via task-aware modulation.
\newblock \emph{Advances in neural information processing systems}, 32.

\bibitem[{Wang et~al.(2020)Wang, Yao, Kwok, and Ni}]{infeasible}
Wang, Y.; Yao, Q.; Kwok, J.~T.; and Ni, L.~M. 2020.
\newblock Generalizing from a few examples: A survey on few-shot learning.
\newblock \emph{ACM computing surveys (csur)}, 53(3): 1--34.

\bibitem[{Wu et~al.(2023)Wu, Liang, Feng, Jin, Lyu, Fei, and
  Wang}]{wu2023metazscil}
Wu, Y.; Liang, T.; Feng, S.; Jin, Y.; Lyu, G.; Fei, H.; and Wang, Y. 2023.
\newblock Metazscil: A meta-learning approach for generalized zero-shot class
  incremental learning.
\newblock In \emph{Proceedings of the AAAI Conference on Artificial
  Intelligence}, volume~37, 10408--10416.

\bibitem[{Yao, Zhang, and Finn(2022)}]{yao2022metalearningfewertaskstask}
Yao, H.; Zhang, L.; and Finn, C. 2022.
\newblock Meta-Learning with Fewer Tasks through Task Interpolation.
\newblock arXiv:2106.02695.

\bibitem[{Zhang et~al.(2024)Zhang, Luo, Yu, Li, Lin, Ye, and
  Zhang}]{zhang2024metadiff}
Zhang, B.; Luo, C.; Yu, D.; Li, X.; Lin, H.; Ye, Y.; and Zhang, B. 2024.
\newblock Metadiff: Meta-learning with conditional diffusion for few-shot
  learning.
\newblock In \emph{Proceedings of the AAAI Conference on Artificial
  Intelligence}, volume~38, 16687--16695.

\bibitem[{Zhou, Wu, and Li(2018)}]{DEML-variant}
Zhou, F.; Wu, B.; and Li, Z. 2018.
\newblock Deep Meta-Learning: Learning to Learn in the Concept Space.
\newblock arXiv:1802.03596.

\bibitem[{Zhou et~al.(2019)Zhou, Yuan, Xu, Yan, and
  Feng}]{zhou2019efficient_bi}
Zhou, P.; Yuan, X.; Xu, H.; Yan, S.; and Feng, J. 2019.
\newblock Efficient meta learning via minibatch proximal update.
\newblock \emph{Advances in Neural Information Processing Systems}, 32.

\end{thebibliography}

\end{document}